\documentclass{article}

\PassOptionsToPackage{numbers, compress}{natbib}

 \usepackage[preprint]{neurips_2026}

\usepackage[utf8]{inputenc} 
\usepackage[T1]{fontenc}    
\usepackage{hyperref}       
\usepackage{url}            
\usepackage{booktabs}       
\usepackage{amsfonts}       
\usepackage{nicefrac}       
\usepackage{microtype}      
\usepackage{xcolor}         

\usepackage{amsmath}
\usepackage{amssymb}
\usepackage{mathtools}
\usepackage{amsthm}
\usepackage[acronym]{glossaries}
\usepackage{tabularx}
\usepackage{multirow}
\usepackage{tikz}
\usepackage{graphicx}
\usepackage{titletoc}
\usepackage{float}
\usetikzlibrary{arrows.meta, positioning, calc, fit, backgrounds, shapes.geometric}                                                 
\usepackage{pifont,colortbl}

\usepackage[capitalize,noabbrev]{cleveref}

\newacronym{auroc}{AUROC}{area under the receiver operating characteristic curve}
\newacronym{bce}{BCE}{binary cross-entropy}
\newacronym{bn}{BN}{batch normalization}
\newacronym{cbm}{CBM}{concept bottleneck model}
\newacronym{ce}{CE}{cross-entropy}
\newacronym{celeba}{\texttt{CelebA}}{CelebFaces Attributes}
\newacronym{cmnist}{\texttt{CMNIST}}{Colored MNIST}
\newacronym{cnc}{CnC}{correct-n-contrast}
\newacronym{cobalt}{CoBalT}{concept balancing technique}

\newacronym{dfr}{DFR}{deep feature reweighting}
\newacronym{gdro}{group DRO}{group distributionally robust optimization}
\newacronym{eiil}{EIIL}{environment inference for invariant learning}
\newacronym{erm}{ERM}{empirical risk minimization}
\newacronym{exmap}{ExMap}{explainability-guided pseudo-group map}
\newacronym{isic}{\texttt{ISIC}}{International Skin Imaging Collaboration}
\newacronym{jtt}{JTT}{just train twice}
\newacronym{moe}{MoE}{mixture of experts}
\newacronym{nct}{NCT}{neural classification tree}
\newacronym{pwga}{pWGA}{pseudo worst-group accuracy}
\newacronym{umnist}{\texttt{UMNIST}}{Undersampled MNIST}
\newacronym{waterbirds}{\texttt{Waterbirds}}{Waterbirds}
\newacronym{wga}{WGA}{worst group accuracy}
\glsunset{waterbirds}
\newacronym{lrp}{LRP}{Layer-wise Relevance Propagation}

\glsdisablehyper

\theoremstyle{plain}
\newtheorem{theorem}{Theorem}[section]

\theoremstyle{definition}
\newtheorem{definition}[theorem]{Definition}
\newtheorem{assumption}[theorem]{Assumption}
\theoremstyle{remark}

\title{Discovering Latent Groups for Robust Classification}

\author{Ankur Garg, Ulrich Aïvodji, Samira Ebrahimi Kahou, Vincent Michalski}

\begin{document}

\maketitle

\begin{abstract}
Machine learning models exploit spurious correlations, achieving high average accuracy but failing disproportionately on underrepresented subgroups. Existing methods address this by adjusting network parameters, guided either by subgroup annotations or inferred pseudo-group labels. Yet at inference, these methods produce only a class prediction, with no insight into a sample's latent subgroup. We propose neural classification trees (NCT), a framework that achieves robustness by encoding subgroup structure in its tree-shaped architecture. By routing each sample to an ``easy'' or ``hard'' node of this tree---based on prediction correctness---and reusing these routes as pseudo-labels for the next iteration, NCT disentangles conflicting subgroups, without requiring subgroup supervision. We evaluate NCT on five benchmarks spanning binary and multi-class spurious correlations. Our experiments show that the learned tree topology provides strong interpretability by consistently isolating minority subgroups, which provides a transparent mapping between the model architecture and the data's latent group structure, while yielding competitive robustness with state-of-the-art methods.~\footnote{\url{https://github.com/agarg-dev/Neural-Classification-Trees/}}
\end{abstract}

\section{Introduction}

Deep neural networks are prone to learning shortcuts or features that are correlated with labels in the training set but have no causal relationship to them.~\citep{Geirhos2020ShortcutLI}. A classic example is \gls{waterbirds} dataset, where a standard model will classify birds based on the background (\emph{water} vs. \emph{land}) rather than the bird's features. When such spurious correlations shift at test time, performance degrades for minority subgroups, such as waterbird in front of a land background.

Although the research community has developed effective methods to mitigate this issue, a critical gap still remains: interpretability of the latent subgroup structure. Supervised approaches such as \gls{gdro}~\citep{sagawa2019distributionally}, while robust, require expensive, fine-grained annotations for every training sample. \Gls{jtt}~\citep{liu2021just}, \gls{eiil}~\citep{creager2021environment}, and \gls{dfr}~\citep{kirichenko2022last} drop training-time labels but still rely on validation group labels, while GEORGE~\citep{sohoni2020no} and \gls{exmap}~\citep{chakraborty2024exmap} operate without any group annotation. Across all three regimes, the model's parameters are adjusted to improve worst-group metrics, but the final classifier remains opaque at inference: it does not reveal which latent subgroup a specific sample belongs to, nor does it structurally isolate conflicting features.

We take a different approach: instead of only relying on parameter updates to handle diverse groups, we make the partition architectural. We introduce NCT, a framework that builds on a well-established observation: training difficulty is a reliable proxy for semantic group identity, with samples aligned with spurious correlations being easy and conflicting counter-examples being hard. NCT iteratively routes samples to easy and hard branches based on this signal, and --- unlike methods that use the same signal transiently --- preserves the resulting partition as the inference-time architecture itself.

Our contributions are: (i) \textbf{Architecture as partition.} Where prior work~\citep{liu2021just, zhang2022correct, pezeshki2024discovering, chakraborty2024exmap} uses correctness, loss, or attribution signals transiently for reweighting or pseudo-group inference, \gls{nct} preserves the difficulty-based partition as the inference-time architecture, with each leaf encoding both the predicted class and the difficulty path. (ii) \textbf{Unsupervised depth selection.} A \gls{pwga} criterion with a Wilson-tolerance early-stopping bound decides when to stop deepening the tree without group annotations. (iii) \textbf{Theoretical motivation.} Iteration-1 errors are enriched with minority samples under simplicity bias (Theorem~\ref{thm:main}); structural separation yields a positive minimax-risk gap under feature conflict (Theorem~\ref{thm:approx}). (iv) \textbf{Empirical evidence.} Across five benchmarks, NCT concentrates minority subgroups in hard branches (82\% of landbird-on-water, 73\% of blond-male, 47\% of benign-no-patch, 71\% of digit-8 and 84\% of color-conflicting digits) while matching or approaching state-of-the-art worst-group accuracy against eight baselines.

\section{Related Work} 

\paragraph{Optimization for Spurious Correlations.}
A dominant paradigm for mitigating spurious correlations modifies the optimization objective to upweight minority groups. When group labels are available, \gls{gdro}~\citep{sagawa2019distributionally} minimizes the worst-case group loss. When training labels are unavailable, recent approaches infer pseudo-groups to guide optimization. GEORGE~\citep{sohoni2020no} clusters representations from a standard \gls{erm} model to estimate subclass labels, subsequently using them for robust optimization. \Gls{jtt}~\citep{liu2021just} identifies error sets from an early-stopped \gls{erm} model and retrains the final model by upweighting these hard samples. \Gls{cnc}~\citep{zhang2022correct} extends this by applying contrastive learning to push the inferred groups apart in feature space. Other works focus on the classifier head; \gls{dfr}~\citep{kirichenko2022last} freezes the feature extractor and retrains only the last layer on a group-balanced validation set. 

\paragraph{Inferring Environments and Pseudo-Groups.}
A parallel line of work treats group identification as an inference problem in its own right. \Gls{eiil}~\citep{creager2021environment} learns a soft environment partition that maximally violates an invariance penalty, which is then handed to invariant or distributionally robust optimizers. LfF~\citep{nam2020learning} reweights a debiased model using the loss of a deliberately biased one, while SelecMix~\citep{hwang2022selecmix} exploits bias-conflicting pairs. More recent work sharpens the partition: XRM~\citep{pezeshki2024discovering} trains twin networks on disjoint halves of the training data and uses confident held-out cross-mistakes to discover environments without group-annotated validation data, and GIC~\citep{han2024gic} infers groups from a spurious-attribute classifier whose predictions vary across distributional shifts. These methods share a downstream pattern: the discovered partition is consumed by a separate invariant-learning or reweighting algorithm, and the resulting classifier is opaque at inference. \Gls{nct} is orthogonal: its contribution is the architectural persistence of the partition, rather than a sharper pseudo-group signal.

\paragraph{Input-Level and Explainability-Guided Interventions.}
Rather than altering the loss function, some methods intervene directly on the data or leverage post-hoc explanations. MaskTune~\citep{taghanaki2022masktune} forces the model to explore robust features by masking out the most discriminative regions of the input image during fine-tuning. \Gls{exmap}~\citep{chakraborty2024exmap} clusters explainability heatmaps from a pre-trained model into pseudo-groups, exploiting the observation that spurious and robust decisions produce distinct attribution patterns. \Gls{nct} contributes to this neighborhood by using correctness rather than attributions as the difficulty signal, and by preserving the resulting partition as the inference-time architecture rather than as a one-off clustering step.

\paragraph{Structural and Modular Interpretability.}
A complementary line of work builds interpretability directly into the model architecture. \Glspl{cbm}~\citep{koh2020concepts} force information to pass through a layer of human-aligned concepts but typically require expensive concept annotations or discovery algorithms like \gls{cobalt}~\citep{arefin2024unsupervised}. Tree-structured classifiers such as ProtoTree~\citep{nauta2021neural} and NBDT~\citep{wan2020nbdt} also expose inference-time decision paths, but through class-taxonomy hierarchies; neither was designed for spurious-correlation settings. \Gls{nct} adopts a modular approach akin to \gls{moe}~\citep{shazeer2017outrageously, jacobs1991adaptive} or neural trees~\citep{tanno2019adaptive}. Unlike standard \gls{moe} that routes samples to maximize predictive likelihood---and therefore collapses to spurious shortcuts---\gls{nct} routes by correctness, decomposing the spurious-vs-core subgroup structure.

Across these four families, \gls{nct} is the only entry whose inference-time architecture exposes the spurious-correlation subgroup partition: each leaf records both the predicted class and the difficulty path that produced it, with no post-hoc clustering required.

\begin{figure}[t]
    \centering
    \includegraphics[width=0.9\linewidth]{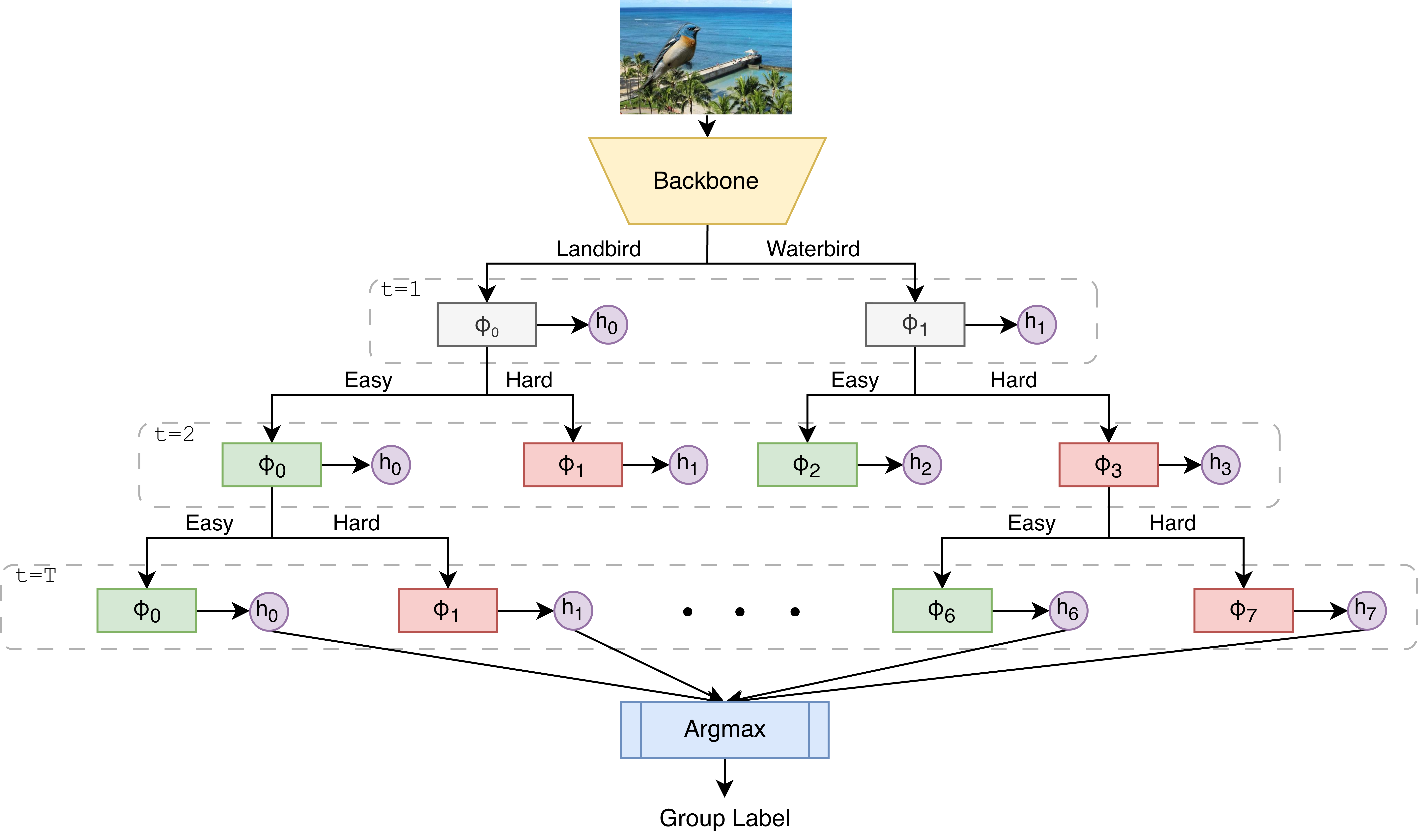}
    \caption{\textbf{\gls{nct} inference.} Backbone features propagate through parent-to-child head connections across iterations. The argmax over all leaf node outputs determines the group label, encoding both class and difficulty path.}
    \label{fig:architecture}
\end{figure}

\begin{figure}[t]
    \centering
    \resizebox{\textwidth}{!}{%
    \begin{tikzpicture}[
        >={Stealth[length=2.5mm]},
        node distance=0.8cm and 1.2cm,
        font=\small\sffamily,
        thick,
        box/.style={
            rectangle, rounded corners=2pt, draw=gray!80,
            minimum width=2cm, minimum height=1cm, align=center, fill=white
        },
        modelbox/.style={ box, fill=gray!15, draw=gray!80 },
        matchbox/.style={
            shape=diamond, aspect=1, draw=gray!80,
            minimum width=1.8cm, minimum height=1.8cm, align=center,
            fill=gray!10, inner sep=0pt
        },
        arrow/.style={->, >=Stealth, draw=gray!70, thick, rounded corners=3pt},
        labelnode/.style={
            font=\scriptsize\sffamily, fill=white, inner sep=1.5pt, text=gray!90!black
        },
        groupbox/.style={
            draw=gray!60, dashed, inner sep=10pt, rounded corners=4pt, fill=gray!5
        }
    ]

    \node[box, fill=blue!10, draw=blue!60!black] (sample) at (0,0) {\textbf{Sample}};

    \node[modelbox] (backbone) at (3.5, 1.5) {Backbone};
    \node[modelbox] (parents) [right=of backbone] {Parent\\[-2pt]Heads};
    \node[modelbox] (heads)   [right=of parents]  {K Heads};
    \node[modelbox] (probs)   [right=of heads]    {K Probs};
    \node[modelbox] (argmax)  [right=of probs]    {argmax};
    
    \node[box, fill=orange!10, draw=orange!60!black] (predicted) [right=of argmax] {\textbf{Predicted}};

    \node[box, fill=blue!15, draw=blue!60!black] (assigned) at (3.5, -1.5) {\textbf{Assigned}\\[-2pt]{\scriptsize(node $\ell$)}};

    \node[matchbox] (match) at (23.5, 0) {Match?};

    \node[box, fill=teal!10, draw=teal!60!black] (easy) at (27.5, 1.2) {$2\ell$ (easy)};
    \node[box, fill=red!10, draw=red!60!black]  (hard) at (27.5, -1.2) {$2\ell{+}1$ (hard)};

    \begin{scope}[on background layer]
        \node[groupbox, fit=(backbone) (heads), label={[anchor=south, yshift=2pt]north:\textbf{NCT Architecture}}] (nct_group) {};
    \end{scope}

    \draw[arrow] (sample.east) -- ++(0.8, 0) coordinate(fork);
    \draw[arrow] (fork) |- (backbone.west);
    \draw[arrow] (fork) |- (assigned.west);

    \draw[arrow] (backbone.east) -- (parents.west);
    \draw[arrow] (parents.east)  -- (heads.west);
    \draw[arrow] (heads.east)    -- (probs.west);
    \draw[arrow] (probs.east)    -- (argmax.west);
    \draw[arrow] (argmax.east)   -- (predicted.west);

    \coordinate (turn_point) at ($(match.west) + (-1.0, 0)$);
    \draw[arrow] (predicted.east) -| (turn_point) -- (match.west);
    \draw[arrow] (assigned.east)  -| (turn_point) -- (match.west);

    \draw[arrow] (match.east) -- ++(0.8, 0) |- (easy.west) node[pos=0.75, labelnode] {Yes};
    \draw[arrow] (match.east) -- ++(0.8, 0) |- (hard.west) node[pos=0.75, labelnode] {No};

    \end{tikzpicture}%
    }
    \caption{\textbf{Routing mechanism.} A sample's current node assignment $\ell_i^{(t)}$ is updated based on prediction correctness: correctly classified samples proceed to the easy branch, while misclassified samples are routed to the hard branch.}
    \label{fig:routing}
\end{figure}

\section{Methodology}

\subsection{Problem Formulation}

We consider a $C$-way classification task over a dataset $\mathcal{D} = \{(x_i, y_i)\}_{i=1}^{N}$, where $x_i \in \mathcal{X}$ is an input image and $y_i \in \{0, \ldots, C{-}1\}$ is the class label. Our objective is to learn a hierarchical mapping that decomposes the data distribution into partitions of varying difficulty without attribute supervision.

The training process proceeds in discrete iterations $t \in \{1, \ldots, T\}$, where the model maintains a set of $K^{(t)} = C \cdot 2^{t-1}$ classification nodes. Each training sample $x_i$ is assigned a unique node index $\ell_i^{(t)} \in \{0, \ldots, K^{(t)}{-}1\}$, with $\ell_i^{(1)} = y_i$. Each node serves as a specialized expert for its assigned samples. Setting $C = 2$ recovers the binary case.

\subsection{Hierarchical Feature Architecture}
\label{sec:architecture}

The \gls{nct} is defined by a shared backbone $f_{\theta}: \mathcal{X} \rightarrow \mathbb{R}^d$ and a hierarchy of classification heads $\mathcal{H}^{(t)}$ as illustrated in \cref{fig:architecture}. Each head, indexed by node $j \in \{0, \ldots, K^{(t)}{-}1\}$, has internal representation $\phi_j^{(t)}(x)$ and output logit $h_j^{(t)}(x)$. For $t=1$, heads receive backbone features $f_\theta(x)$; for $t > 1$, head $j$ receives parent representation $\phi_{\lfloor j/2 \rfloor}^{(t-1)}(x)$. We denote the full output vector as $\mathbf{h}^{(t)}(x) = [h_0^{(t)}(x), \ldots, h_{K^{(t)}{-}1}^{(t)}(x)]$.

For the initial iteration ($t=1$), the heads operate directly on the backbone features $z = f_\theta(x)$. For subsequent iterations ($t > 1$), we implement a \textbf{hierarchical feature flow}. Rather than discarding previous heads, we retain all heads from prior iterations and utilize them as intermediate feature extractors. A specific child head $j$ at iteration $t$ receives its input not from the raw backbone, but from the parent representation $\phi_{\lfloor j/2 \rfloor}^{(t-1)}(x)$. This ensures that specialized heads in deeper layers operate on features that have already been conditioned by the class-specific semantics of the parent.

\paragraph{Asymmetric Head Capacity.} For $t \geq 2$, each split produces an even-indexed easy child ($2j$) and an odd-indexed hard child ($2j{+}1$). Easy children use a linear $\phi_j^{(t)}$; hard children use a linear--ReLU--dropout block. The asymmetry reflects what each branch handles: easy partitions contain samples the parent already classifies correctly and need little added capacity, while hard partitions absorb the misclassified samples whose conflicting features demand richer representations.

\paragraph{Inference.} During inference, backbone features propagate through the hierarchy and all leaf nodes produce outputs in parallel as shown in \cref{fig:architecture}. We define the predicted node assignment as $\hat{\ell}_i^{(t)} = \arg\max_{j} h_j^{(t)}(x_i)$. The final class is read off from the predicted leaf:
\begin{equation}
\hat{y}_i \;=\; \bigl\lfloor \hat{\ell}_i^{(T)} \,/\, 2^{T-1} \bigr\rfloor,
\label{eq:inference}
\end{equation}
which generalizes the binary $\hat{\ell}_i^{(T)} < 2^{T-1}$ rule to $C$ classes. Beyond the class, $\hat{\ell}_i^{(T)}$ also records the easy/hard trajectory along the route to its leaf, exposing the model's structural grouping of the input.

\subsection{Difficulty-Based Sample Partitioning}

We employ a \textbf{correctness-based partitioning} strategy to determine the tree topology (\cref{fig:routing}). This mechanism divides the training data based on the alignment between the predicted and assigned nodes. The routing is deterministic and driven by training dynamics. We define the binary difficulty indicator as $d_i^{(t)} = \mathbb{I}[\hat{\ell}_i^{(t)} \neq \ell_i^{(t)}]$.

The node assignment for the next iteration follows
\begin{equation}
\ell_i^{(t+1)} = 2\,\ell_i^{(t)} + d_i^{(t)}.
\label{eq:routing}
\end{equation}
This update rule guarantees that a parent node $j$ splits into exactly two children: an easy child ($2j$) containing samples correctly classified at step $t$, and a hard child ($2j + 1$) containing misclassified samples. Consequently, the binary string $d_i^{(1)} d_i^{(2)} \cdots d_i^{(T-1)}$ along the route to leaf $\ell_i^{(T)}$ encodes the precise history of training difficulty for the samples assigned to it.

\subsection{Optimization and Stability}
\label{sec:optimization}

Training a hierarchical model iteratively can lead to catastrophic forgetting~\cite{mccloskey1989catastrophic}, where updates for child nodes destabilize the parent representations that children depend on. To mitigate this, we employ a stabilized optimization objective and a two-phase training protocol.

\paragraph{Auxiliary Loss.} To ensure that the feature representations remain valid for the parent tasks, we enforce an auxiliary loss on the parent nodes during child training. The total loss at iteration $t$ is

\begin{equation}
\mathcal{L}_{\text{total}} = \mathcal{L}_{\text{BCE}}\bigl(\sigma(\mathbf{h}^{(t)}(x)),\, \ell^{(t)}\bigr) \\ + \lambda_{\text{aux}}\, \mathcal{L}_{\text{BCE}}\bigl(\sigma(\mathbf{h}^{(t-1)}(x)),\, \ell^{(t-1)}\bigr),
\label{eq:total_loss}
\end{equation}

where $\mathcal{L}_{\text{BCE}}$ is the one-vs-all \gls{bce} computed against the one-hot encoding of the node assignment, $\ell^{(t-1)} = \lfloor \ell^{(t)} / 2 \rfloor$ is the parent assignment, $\mathbf{h}^{(t-1)}(x)$ is the output logits of the parent heads, and $\lambda_{\text{aux}}$ controls the trade-off between child specialization and parent stability. 

\paragraph{Two-Phase Training.} Each iteration is trained in two phases to stabilize feature learning. Phase~1 freezes the backbone and parent layers so the new heads adapt to the existing feature space. Phase~2 fine-tunes the entire network so the backbone can resolve hard nodes that were previously inseparable.

\subsection{Controlling Hierarchy Depth}
\label{sec:depth_control}

A correctness-based partition grows multiplicatively in $T$, but not every dataset benefits from a deeper tree. \Gls{nct} pairs two mechanisms---sparse-node merging and a pseudo-WGA depth criterion---both operating without group annotations.

\paragraph{Sparse-Node Merging.} Routing can produce hard children with very few samples, especially after several splits. We merge any hard child $j$ with training count $n_j < m_{\min}$ into its easy sibling $j-1$, reassigning the corresponding pseudo-labels. At inference, the merged head's logit is masked to $-\infty$, ensuring the $\arg\max$ never selects an unused leaf. Setting $m_{\min}=0$ disables merging.

\paragraph{Pseudo-WGA Criterion.} To decide when to stop deepening, at each iteration $t \geq 2$ we group validation samples by the iteration-2 ancestor of their predicted leaf, $a_i^{(t)} = \lfloor \hat{\ell}_i^{(t)} / 2^{t-2} \rfloor \in \{0, \ldots, K^{(2)}{-}1\}$, and define $\mathrm{pWGA}_2^{(t)}$ as the minimum validation accuracy across these $K^{(2)}$ groups. Evaluating at iteration-2 granularity (rather than at the current depth) keeps each group large enough for reliable accuracy estimates as the tree deepens. We keep iteration $t$ as the operating depth only if its $\mathrm{pWGA}_2$ stays within a Wilson tolerance of the running best:
\begin{equation}
\mathrm{pWGA}_2^{(t)} \;\geq\; \max_{s<t}\mathrm{pWGA}_2^{(s)} \;-\; z_\alpha\sqrt{\tfrac{p^{\star}(1-p^{\star})}{n_{\text{worst}}}},
\label{eq:depth_rule}
\end{equation}
where $p^{\star}$ is the running best, $n_{\text{worst}}$ the worst group's sample count, and $z_\alpha = 1.96$. If the inequality fails we stop deepening and return the iteration with the highest $\mathrm{pWGA}_2$. Appendix~\ref{app:pwga_correlation} verifies that $\mathrm{pWGA}_2$ tracks true \gls{wga} closely across our benchmarks.

\section{Theoretical Analysis}
\label{sec:theory}

We provide theoretical motivation for \gls{nct}'s design by analyzing: (1) when correctness-based routing recovers minority subgroups, and (2) why structural separation can reduce worst-case risk compared to single-model approaches.

\paragraph{Setup.} Consider a classification problem where inputs $x \in \mathcal{X}$ have class labels $y \in \{0,1\}$ and latent attributes $a \in \{0,1\}$ representing spurious features. Let $\rho = P(a = y) > 0.5$ denote the spurious correlation strength. This creates four subgroups: two \emph{majority} groups where $a = y$ (e.g., waterbirds on water) and two \emph{minority} groups where $a \neq y$ (e.g., waterbirds on land). Standard \gls{erm} fails on minority groups because it exploits the spurious correlation.

\subsection{Minority Group Recovery}

 \Gls{nct} routes samples to ``easy'' and ``hard'' branches based on prediction correctness: $d_i^{(t)} = \mathbb{I}[\hat{\ell}_i^{(t)}
   \neq \ell_i^{(t)}]$. Since $\ell_i^{(1)} = y_i$, we analyze when this error-based signal at the first iteration correlates with
  minority group membership.

\begin{assumption}[Simplicity Bias]
\label{assump:simplicity}
After iteration 1 training, the model's predictions satisfy $P(\hat{\ell}^{(1)} = a) = 1 - \epsilon$ for some small $\epsilon \geq 0$.
\end{assumption}

This assumption reflects the empirically observed tendency of neural networks to learn simpler, more prominent features before complex ones \citep{shah2020pitfalls, hermann2020shapes}. When spurious correlations are strong, the spurious attribute provides an easy solution that gradient descent finds early in training.

\begin{theorem}[Minority Enrichment]
\label{thm:main}
Under Assumption~\ref{assump:simplicity}, let $\mathcal{E} = \{i : \hat{\ell}_i^{(1)} \neq \ell_i^{(1)}\}$ denote the set of misclassified samples after iteration 1. The proportion of minority samples in this error set is:
\begin{equation}
P(a \neq y \mid i \in \mathcal{E}) = \frac{(1-\epsilon)(1-\rho)}{\epsilon\rho + (1-\epsilon)(1-\rho)}
\label{eq:enrichment}
\end{equation}
When $\epsilon = 0$, this equals 1: all misclassified samples are from minority groups.
\end{theorem}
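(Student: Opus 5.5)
The plan is a direct Bayes computation. We partition the population into the majority event $M=\{a=y\}$, with $P(M)=\rho$, and the minority event $M^c=\{a\neq y\}$, with $P(M^c)=1-\rho$. We then compute $P(\mathcal{E}\cap M)$ and $P(\mathcal{E}\cap M^c)$ separately and form the ratio. Because $\ell^{(1)}=y$ and everything is binary ($y,a,\hat\ell^{(1)}\in\{0,1\}$), the error event is $\{\hat\ell^{(1)}\neq y\}$. On each subgroup this reduces to a statement about whether the prediction agrees with $a$:
\begin{itemize}
\item On $M$, since $y=a$, we have $\hat\ell^{(1)}\neq y$ iff $\hat\ell^{(1)}\neq a$.
\item On $M^c$, since $y=1-a$, we have $\hat\ell^{(1)}\neq y$ iff $\hat\ell^{(1)}=a$.
\end{itemize}
This case split is the key step. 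It converts the error event into the "follows the spurious attribute" event controlled by Assumption~\ref{assump:simplicity}.

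Next we plug in the assumption. As literally stated, Assumption~\ref{assump:simplicity} is only marginal: $P(\hat\ell^{(1)}=a)=1-\epsilon$. The exact formula \eqref{eq:enrichment} requires that the rate of agreement with $a$ be the same in both subgroups, namely
\[
P(\hat\ell^{(1)}=a\mid M)=P(\hat\ell^{(1)}=a\mid M^c)=1-\epsilon .
\]
This means the model's deviation from the spurious rule is independent of whether the sample is majority or minority. I would state explicitly that the assumption is read in this group-conditional sense, which is natural since the model does not "see" group membership beyond $a$. Under that reading we get
\[
P(\mathcal{E}\cap M)=\rho\,\epsilon,\qquad P(\mathcal{E}\cap M^c)=(1-\rho)(1-\epsilon).
\]
Dividing by $P(\mathcal{E})=\epsilon\rho+(1-\epsilon)(1-\rho)$ gives \eqref{eq:enrichment}.

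Finally, for $\epsilon=0$ the numerator and denominator both equal $1-\rho$, which is positive because $\rho<1$ is implicit in minorities existing. The ratio is therefore $1$. I would also remark that the denominator is positive for every $\epsilon\in[0,1]$, since $1-\rho>0$ and $\rho>0$, so the expression is always well defined.

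The main obstacle is the gap between the marginal assumption and the conditional independence the formula needs. With only the marginal constraint, the errors could in principle be distributed arbitrarily across the groups, so the exact identity fails and one obtains at best bounds. My plan is to make the group-conditional reading explicit as the interpretation of Assumption~\ref{assump:simplicity}, rather than attempt to derive it. If a purely marginal version were desired, I would instead bound $P(\mathcal{E}\cap M)\le\epsilon$ and $P(\mathcal{E}\cap M^c)\ge(1-\rho)-\epsilon$. This gives a lower bound on enrichment, $\frac{1-\rho-\epsilon}{1-\rho}$, which still tends to $1$ as $\epsilon\to0$.
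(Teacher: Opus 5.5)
Your proof is correct and follows the paper's argument: split on $a=y$ versus $a\neq y$, rewrite the error event as disagreement (resp.\ agreement) with $a$, and apply Bayes with prior $P(a\neq y)=1-\rho$. The paper uses the group-conditional reading you flag without stating it, when it writes $P(\hat\ell^{(1)}\neq y\mid a=y)=P(\hat\ell^{(1)}\neq a)=\epsilon$, so making it explicit is right; your marginal-only fallback bound is also valid, and imposing $P(\hat\ell^{(1)}\neq a)=\epsilon$ jointly on both groups sharpens it to the tight value $\frac{1-\rho}{1-\rho+\epsilon}$ for $\epsilon\le 1-\rho$.
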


\begin{proof}[Proof sketch]
Since $\ell_i^{(1)} = y_i$, we write $y$ for the true label. Under simplicity bias, the per-group error rates are $\epsilon$ on majority groups ($a = y$) and $1-\epsilon$ on minority groups ($a \neq y$). Applying Bayes' rule with prior $P(a \neq y) = 1-\rho$ yields \cref{eq:enrichment}. Full derivation in Appendix~\ref{app:thm_main_proof}.
\end{proof}

For \gls{waterbirds} ($\rho = 0.95$, $\epsilon = 0.02$), \cref{eq:enrichment} gives ${\sim}72\%$ minority share in the error set---a $14\times$ enrichment over their $5\%$ population fraction, which routes them into the hard branch. The bound extends to $C$-way classification by replacing $\rho$ with the per-class majority-attribute probability $\rho_y = P(a = a^{\star}(y) \mid y)$, preserving enrichment whenever $\rho_y > 1/C$.

\subsection{Benefit of Structural Separation}

We now provide motivation for why training separate classifiers for different subgroups can reduce worst-case risk, compared to training a single classifier on all data.

Within each class $c$, suppose the data naturally partitions into subgroups $\mathcal{S}_c^E$ (easy) and $\mathcal{S}_c^H$ (hard) that may require different features for optimal classification---corresponding to samples that will be assigned to nodes $2c$ (easy) and $2c{+}1$ (hard) after the first split.
Since each head performs one-vs-all classification, the head at node $2c$ distinguishes $\mathcal{S}_c^E$ from $\mathcal{S}_c^H\cup\mathcal{S}_{\neg c}$, while the head at node $2c{+}1$ distinguishes $\mathcal{S}_c^H$ from $\mathcal{S}_c^E\cup\mathcal{S}_{\neg c}$, where $\mathcal{S}_{\neg c}$ denotes all samples from the opposite class.

Let $\mathcal{G}$ be the hypothesis class of a classifier head, and let $R_{\mathcal{S}}(g)$ denote the risk of classifier $g$ on subgroup $\mathcal{S}$. For notational convenience, let $R_E(\cdot) = R_{\mathcal{S}_c^E}(\cdot)$ and $R_H(\cdot) = R_{\mathcal{S}_c^H}(\cdot)$. Define the optimal classifiers:
\begin{align}
g_E^{*} &= \arg\min_{g \in \mathcal{G}} R_E(g) \\
g_H^{*} &= \arg\min_{g \in \mathcal{G}} R_H(g)
\end{align}

\begin{definition}[Feature Conflict]
\label{def:conflict}
\emph{Feature conflict} occurs when the optimal hypotheses for the easy and hard subgroups differ: $g_E^{*} \neq g_H^{*}$.
\end{definition}

Feature conflict arises when the features minimizing risk on the majority group differ from those required for the minority group. While a single oracle classifier could in principle achieve low risk on both by relying on causal features, simplicity bias drives \gls{erm} toward the simpler majority solution $g_E^{*}$, conflicting with the minority-optimal $g_H^{*}$.

\begin{theorem}[Approximation Gap]
\label{thm:approx}
Under feature conflict, for any single classifier $g \in \mathcal{G}$:

\begin{equation}
\label{eq:gap}
\begin{split}
    \max\bigl(R_E(g), R_H(g)\bigr) &\geq \\
    \max\bigl(R_E(g_E^{*}), &R_H(g_H^{*})\bigr) + \Delta
\end{split}
\end{equation}

where $\Delta > 0$ when feature conflict exists.
\end{theorem}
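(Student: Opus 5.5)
The plan is to make $\Delta$ explicit as a minimax gap and then show it is strictly positive. For a single head $g \in \mathcal{G}$, write $M(g) = \max(R_E(g), R_H(g))$. Set $M^{*} = \inf_{g \in \mathcal{G}} M(g)$ and $B = \max(R_E(g_E^{*}), R_H(g_H^{*}))$. We then define
\begin{equation*}
\Delta \;:=\; M^{*} - B.
\end{equation*}
With this choice, \cref{eq:gap} holds for every $g$ directly from the definition of the infimum. The whole content of the theorem is therefore the claim $\Delta > 0$, and the first step is to record that $\Delta \ge 0$ always. This follows because $R_E(g) \ge R_E(g_E^{*})$ and $R_H(g) \ge R_H(g_H^{*})$ for every $g$, so $M(g) \ge B$.

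For strictness, I would argue by contradiction. Suppose $M^{*} = B$ and the infimum is attained by some $\bar g$; attainment can be assumed if $\mathcal{G}$ is compact and the risks are continuous, e.g.\ a bounded parameter set. Consider the case where the maximum defining $B$ is realized jointly, meaning $R_E(\bar g) = R_E(g_E^{*})$ and $R_H(\bar g) = R_H(g_H^{*})$. Then $\bar g$ minimizes both $R_E$ and $R_H$. Reading Definition~\ref{def:conflict} as ``the minimizer sets $\arg\min R_E$ and $\arg\min R_H$ are disjoint'' gives $g_E^{*} = \bar g = g_H^{*}$ up to the choice of representative, contradicting feature conflict. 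This interpretation needs to be stated explicitly, since with non-unique argmins the literal inequality $g_E^{*} \neq g_H^{*}$ is too weak.

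The main obstacle is the asymmetric case, say $R_E(g_E^{*}) < R_H(g_H^{*}) = B$. Here $\bar g$ only needs $R_H(\bar g) = R_H(g_H^{*})$ and $R_E(\bar g) \le B$. It can then be a hard-optimal classifier that happens to be good enough on the easy group, and $\Delta = 0$ is genuinely possible. I would first try to resolve this by strengthening the conflict hypothesis to the form that simplicity bias motivates. The requirement is that every minimizer of $R_H$ incurs easy-group risk exceeding $R_H(g_H^{*})$, and symmetrically in the other direction. In words, each group's optimal features hurt the other group by more than the benchmark level. Under this strengthened condition, any $g$ with $M(g) = B$ would have to be optimal on the binding group while staying within $B$ on the other, which is excluded. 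Compactness then upgrades the pointwise strict inequality to a uniform gap $\Delta > 0$.

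As a sanity check, I would also give a two-Gaussian linear example in which $\Delta$ is computable. The easy group is separated along $w_E$ and the hard group along $w_H$, with $\langle w_E, w_H\rangle < 0$. In this example $\Delta$ grows with the angle between $w_E$ and $w_H$, which illustrates why separate heads at nodes $2c$ and $2c{+}1$ attain $B$ while a single head cannot.
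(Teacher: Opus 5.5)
Your argument is sound, and it is more candid than the paper's, but it takes a different route. You and the paper both define $\Delta$ as the minimax gap $\inf_{g}\max(R_E(g),R_H(g))-\max(R_E(g_E^{*}),R_H(g_H^{*}))$, so the displayed inequality and $\Delta\ge 0$ are immediate. The paper never argues positivity in general. It instantiates a two-feature Gaussian model with linear heads through the origin and shows that, for any single $w$, the smaller of the two normalized margins, $(w_2\mu_c-|w_1|\mu_s)/\|w\|$, is at most $\mu_c$, with equality only at $w_1=0$. It then reads off $\Delta=\Phi(-\mu_c)-\Phi(-\sqrt{\mu_s^2+\mu_c^2})>0$.

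You argue abstractly instead, and your diagnosis is right. Whenever $R_E(g_H^{*})\le R_H(g_H^{*})$, the head $g_H^{*}$ already attains $B$: we have $M(g_H^{*})=R_H(g_H^{*})\le B$ and $M\ge B$ always, so $\Delta=0$ even though $g_E^{*}\neq g_H^{*}$. Non-attainment of the infimum is a second failure mode, and your compactness assumption removes it.

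The trade-off is this. Once the infimum is attained, your strengthened conflict condition is exactly equivalent to $\Delta>0$: a head reaches $B$ iff it minimizes the binding group's risk while keeping the other group within $B$. So your route pins down the correct hypothesis, but it does not by itself exhibit a setting where that hypothesis holds. The paper's computation does exhibit one, and it also gives the closed form, its monotonicity in $\mu_s$, and the $14.6\%$ figure.

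Two remarks on your sanity check. First, in the paper's model the two specialist risks coincide. Each group's optimal direction, $(\mu_s,\mu_c)$ and $(-\mu_s,\mu_c)$ respectively, is unique on the compact unit circle. So your jointly-binding case already proves $\Delta>0$ there without computing anything; the explicit minimax calculation is needed only for the value of $\Delta$. Second, the requirement $\langle w_E,w_H\rangle<0$ is unnecessary. In that model $\langle w_E,w_H\rangle=\mu_c^2-\mu_s^2$, yet $\Delta>0$ for every $\mu_s>0$, including the acute regime $\mu_s<\mu_c$.
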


\begin{proof}
The gap
\[
\Delta \;=\; \min_{g \in \mathcal{G}} \Bigl[\max\bigl(R_E(g), R_H(g)\bigr) - \max\bigl(R_E(g_E^{*}), R_H(g_H^{*})\bigr) \Bigr]
\]
is non-negative by construction. A strictly positive lower bound is established in Appendix~\ref{app:delta_bound} for a gaussian feature model with spurious magnitude $\mu_s$ and core magnitude $\mu_c$, taking the closed form
\[
\Delta \;=\; \Phi(-\mu_c) - \Phi\!\bigl(-\sqrt{\mu_s^2 + \mu_c^2}\bigr) \;>\; 0 \quad \text{whenever } \mu_s > 0,
\]
with $\Phi$ the standard normal CDF. The gap is monotone increasing in spurious strength: stronger spurious correlations make structural separation more beneficial. For a \gls{waterbirds}-like regime ($\mu_s = 2$, $\mu_c = 1$), $\Delta \approx 14.6\%$.
\end{proof}

\paragraph{Connection to NCT.} \Gls{nct} addresses the gap of Theorem~\ref{thm:approx} by training separate heads for each partition. By Theorem~\ref{thm:main}, under simplicity bias, the hard heads specialize on training data enriched with minority samples; at inference, the cross-head $\arg\max$ (\cref{eq:inference}) lets the head specialized on similar samples determine the prediction. The realized benefit depends on (i) routing fidelity (Theorem~\ref{thm:main}) and (ii) per-head learning quality. While the framework does not guarantee optimality, it provides a principled mechanism for specialization that parametric reweighting methods lack.

\section{Experimental Setup}

\textbf{Datasets.} We evaluate on five spurious-correlation benchmarks: \gls{waterbirds}~\citep{sagawa2019distributionally} (background), \gls{celeba}~\citep{liu2015faceattributes} (demographic), \gls{isic}~\citep{codella2018skin,tschandl2018ham10000} (acquisition artifacts), \gls{umnist}~\citep{sohoni2020no} (digit identity), and the 5-class \gls{cmnist}~\citep{arjovsky2019invariant,zhang2022correct} (color). Appendix~\ref{app:datasets} gives the full description.

\textbf{Evaluation Metrics.} Our primary metric is \gls{wga}---the lowest accuracy across pre-defined subgroups---reported alongside average accuracy. \gls{isic} uses \gls{auroc} following the GEORGE~\citep{sohoni2020no} evaluation protocol.

\textbf{Implementation Details.} Backbones are ImageNet-pretrained~\citep{deng2009imagenet} ResNet-50~\citep{he2016deep} for \gls{waterbirds}, \gls{celeba}, and \gls{isic}, and LeNet-5~\citep{lecun1998gradient} for \gls{umnist} and \gls{cmnist}. Models train for up to three iterations with AdamW~\citep{loshchilov2017decoupled}, using two phases per iteration (Section~\ref{sec:optimization}) and the depth criterion of Section~\ref{sec:depth_control}. Hyperparameters and per-dataset node-sampling strategies are tuned via Optuna~\citep{akiba2019optuna} against validation $\mathrm{pWGA}_2$; full configuration in Appendices~\ref{app:hyperparams} and~\ref{app:sampling}.

\textbf{Baselines.} We compare against eight baselines spanning three supervision tiers: \emph{group-supervised} (\gls{gdro}~\citep{sagawa2019distributionally}); \emph{validation-group-supervised} (\gls{jtt}~\citep{liu2021just}, \gls{cnc}~\citep{zhang2022correct}, \gls{eiil}~\citep{creager2021environment}, \gls{dfr}~\citep{kirichenko2022last}); and \emph{unsupervised} (\gls{erm}, GEORGE~\citep{sohoni2020no}, \gls{exmap}~\citep{chakraborty2024exmap}). Where published numbers are unavailable for \gls{isic} or \gls{umnist}, we adapt the official implementations under the same Optuna budget as \gls{nct}.

\section{Results and Analysis}

We evaluate \gls{nct} through three lenses: (1) robustness against spurious correlations compared to baselines, (2) quantitative interpretability through node-level sample alignment, and (3) qualitative interpretability through feature attribution. Auxiliary diagnostics --- pseudo-WGA proxy quality (Appendix~\ref{app:pwga_correlation}), cross-head calibration (Appendix~\ref{app:calibration}), training-time comparison (Appendix~\ref{app:training_time}), and component ablations (Appendix~\ref{app:component_ablations}) --- are deferred to the appendix.

\begin{table}[t]
\centering
\caption{\textbf{Main results.} \gls{wga} and average accuracy (\gls{auroc} for ISIC) across five benchmarks. Property columns: \textbf{T} = uses training group labels, \textbf{V} = uses validation group labels, \textbf{I} = inference-time interpretability of the discovered partition. Cells marked $^{*}$ are sourced from prior baseline papers. ``--'' indicates baselines not adapted to that benchmark.}
\label{tab:main_results}
\resizebox{\textwidth}{!}{%
\begin{tabular}{l ccc cc cc ccc cc cc}
\toprule
\multirow{2}{*}{\textbf{Method}}
 & \multicolumn{3}{c}{\textbf{Properties}}
 & \multicolumn{2}{c}{\textbf{Waterbirds}}
 & \multicolumn{2}{c}{\textbf{CelebA}}
 & \multicolumn{3}{c}{\textbf{ISIC (AUROC)}}
 & \multicolumn{2}{c}{\textbf{UMNIST}}
 & \multicolumn{2}{c}{\textbf{CMNIST}} \\
\cmidrule(lr){2-4}\cmidrule(lr){5-6}\cmidrule(lr){7-8}\cmidrule(lr){9-11}\cmidrule(lr){12-13}\cmidrule(lr){14-15}
 & T & V & I
 & WGA & Avg & WGA & Avg
 & Overall & Non-Patch & Histopath.
 & WGA & Avg & WGA & Avg \\
\midrule
Group DRO & \ding{51} & \ding{51} & \ding{55}
 & 90.7$^{*}$\,\tiny{$\pm$0.4} & 92.7$^{*}$\,\tiny{$\pm$0.4}
 & 89.3$^{*}$\,\tiny{$\pm$0.9} & 92.8$^{*}$\,\tiny{$\pm$0.1}
 & .933$^{*}$\,\tiny{$\pm$.005} & .923$^{*}$\,\tiny{$\pm$.003} & .875$^{*}$\,\tiny{$\pm$.004}
 & 96.8$^{*}$\,\tiny{$\pm$0.4} & 98.0$^{*}$\,\tiny{$\pm$0.3}
 & 78.5$^{*}$\,\tiny{$\pm$4.5} & 90.6$^{*}$\,\tiny{$\pm$0.1} \\
\midrule
JTT  & \ding{55} & \ding{51} & \ding{55}
     & 86.7$^{*}$ & 93.3$^{*}$ & 81.1$^{*}$ & 88.0$^{*}$
     & .892\,\tiny{$\pm$.005} & .862\,\tiny{$\pm$.006} & .827\,\tiny{$\pm$.008}
     & 92.2\,\tiny{$\pm$1.0} & 98.5\,\tiny{$\pm$0.2}
     & 74.5$^{*}$\,\tiny{$\pm$2.4} & 90.2$^{*}$\,\tiny{$\pm$0.8} \\
CnC  & \ding{55} & \ding{51} & \ding{55}
     & 88.5$^{*}$\,\tiny{$\pm$0.3} & 90.9$^{*}$\,\tiny{$\pm$0.1}
     & 88.8$^{*}$\,\tiny{$\pm$0.9} & 89.9$^{*}$\,\tiny{$\pm$0.5}
     & .951\,\tiny{$\pm$.008} & .915\,\tiny{$\pm$.014} & .870\,\tiny{$\pm$.025} & 92.8\,\tiny{$\pm$1.8} & 97.3\,\tiny{$\pm$0.8}
     & 77.4$^{*}$\,\tiny{$\pm$3.0} & 90.9$^{*}$\,\tiny{$\pm$0.6} \\
EIIL & \ding{55} & \ding{51} & \ding{55}
     & 87.3$^{*}$ & 93.1$^{*}$ & 81.3$^{*}$ & 89.5$^{*}$
     & .958\,\tiny{$\pm$.007} & .921\,\tiny{$\pm$.013} & .877\,\tiny{$\pm$.017}
     & 94.0\,\tiny{$\pm$1.5} & 97.6\,\tiny{$\pm$0.2}
     & 72.8$^{*}$\,\tiny{$\pm$6.8} & 90.7$^{*}$\,\tiny{$\pm$0.9} \\
DFR  & \ding{55} & \ding{51} & \ding{55}
     & 92.1$^{*}$ & 96.7$^{*}$ & 86.9$^{*}$ & 91.1$^{*}$
     & .957\,\tiny{$\pm$.006} & .922\,\tiny{$\pm$.008} & .876\,\tiny{$\pm$.011}
     & 97.1\,\tiny{$\pm$0.1} & 97.8\,\tiny{$\pm$0.0}
     & -- & -- \\
\midrule
ERM    & \ding{55} & \ding{55} & \ding{55}
       & 63.3$^{*}$\,\tiny{$\pm$1.6} & 97.3$^{*}$\,\tiny{$\pm$0.1}
       & 40.3$^{*}$\,\tiny{$\pm$2.3} & 95.7$^{*}$\,\tiny{$\pm$0.0}
       & .957$^{*}$\,\tiny{$\pm$.002} & .922$^{*}$\,\tiny{$\pm$.003} & .875$^{*}$\,\tiny{$\pm$.005}
       & 93.9$^{*}$\,\tiny{$\pm$0.6} & 98.7$^{*}$\,\tiny{$\pm$0.1}
       & 0.0$^{*}$\,\tiny{$\pm$0.0} & 20.1$^{*}$\,\tiny{$\pm$0.2} \\
GEORGE & \ding{55} & \ding{55} & \ding{55}
       & 76.2$^{*}$\,\tiny{$\pm$2.0} & 95.7$^{*}$\,\tiny{$\pm$0.5}
       & 53.7$^{*}$\,\tiny{$\pm$1.3} & 94.6$^{*}$\,\tiny{$\pm$0.2}
       & .927$^{*}$\,\tiny{$\pm$.008} & .912$^{*}$\,\tiny{$\pm$.005} & .876$^{*}$\,\tiny{$\pm$.006}
       & 95.7$^{*}$\,\tiny{$\pm$0.6} & 98.1$^{*}$\,\tiny{$\pm$0.3}
       & 76.4$^{*}$\,\tiny{$\pm$2.3} & 89.5$^{*}$\,\tiny{$\pm$0.3} \\
ExMap  & \ding{55} & \ding{55} & \ding{55}
       & 92.5$^{*}$ & 96.0$^{*}$ & 84.4$^{*}$ & 91.8$^{*}$
       & .957\,\tiny{$\pm$.004} & .923\,\tiny{$\pm$.008} & .878\,\tiny{$\pm$.017}
       & 96.7\,\tiny{$\pm$0.3} & 97.6\,\tiny{$\pm$0.1}
       & -- & -- \\
\rowcolor{gray!12}
NCT (Ours) & \ding{55} & \ding{55} & \ding{51}
       & 88.0\,\tiny{$\pm$0.9} & 92.8\,\tiny{$\pm$0.7}
       & 86.1\,\tiny{$\pm$0.9} & 88.4\,\tiny{$\pm$0.9}
       & .959\,\tiny{$\pm$.001} & .924\,\tiny{$\pm$.002} & .880\,\tiny{$\pm$.002}
       & 93.7\,\tiny{$\pm$2.5} & 97.5\,\tiny{$\pm$0.4}
       & 72.9\,\tiny{$\pm$2.8} & 88.4\,\tiny{$\pm$1.5} \\
\bottomrule
\end{tabular}%
}
\end{table}

\subsection{Quantitative Performance: Mitigating Spurious Correlations}

Table~\ref{tab:main_results} reports the full comparison (per-iteration breakdowns and per-seed depth selection are in Appendix~\ref{app:depth_diagnostics}). On \gls{isic}, all three of \gls{nct}'s \gls{auroc} numbers lead the table, narrowly above Group DRO despite using no training group labels. Within the unsupervised tier, \gls{nct} improves on \gls{erm} and GEORGE on almost every benchmark and is competitive with \gls{exmap} on \gls{waterbirds} and \gls{celeba}. Against validation-supervised methods, \gls{nct} matches or narrowly exceeds them on \gls{waterbirds}, is comparable on \gls{cmnist} and \gls{celeba}, but trails on \gls{umnist}. Overall, \gls{nct} is competitive with the strongest baselines on most benchmarks in unsupervised tier while remaining the only unsupervised entry that exposes the discovered partition structurally.

\subsection{Structural Interpretability: Latent Group Discovery}
\label{sec:structural_interpretability}

\begin{figure}[t]
    \centering
    \includegraphics[width=\textwidth]{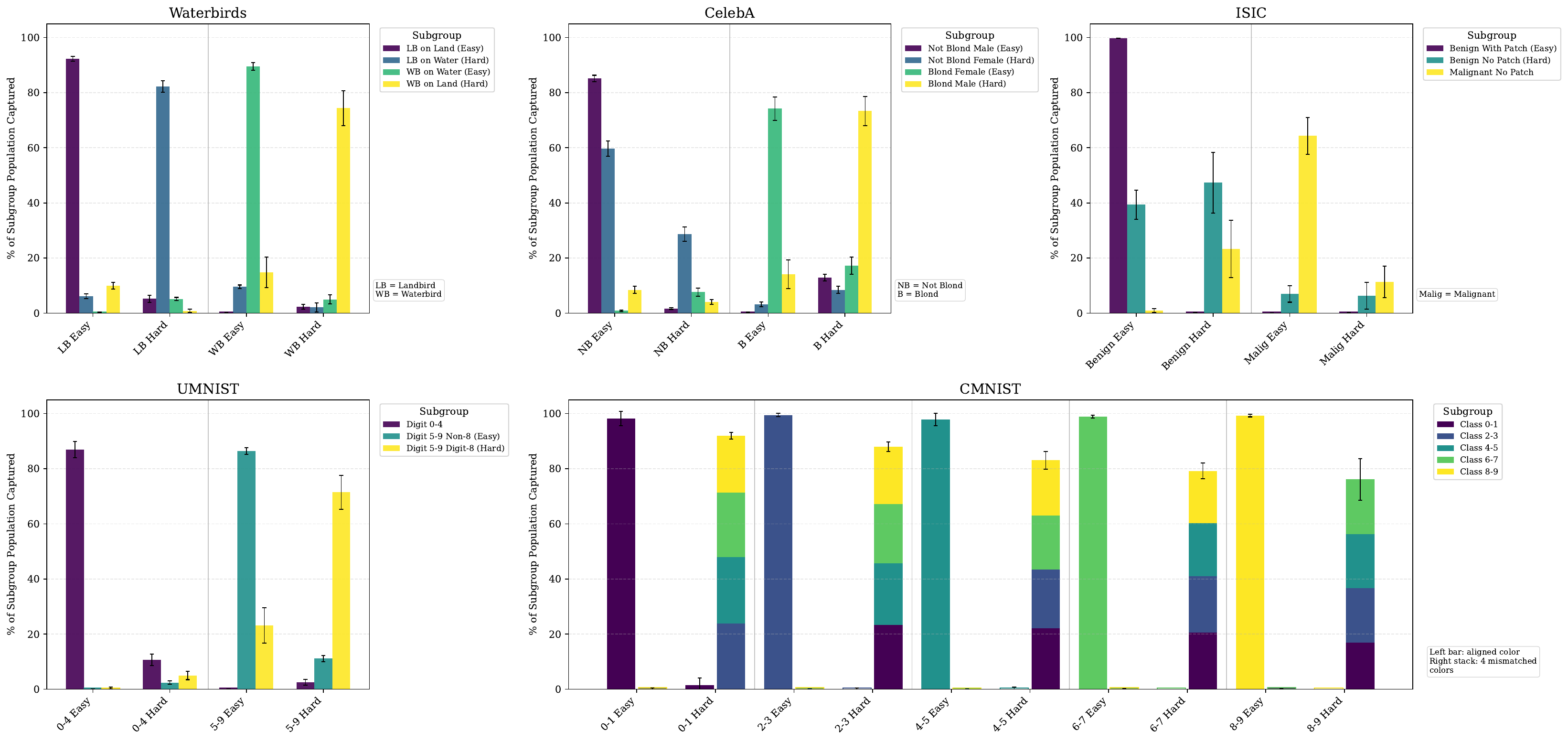}
    \caption{\textbf{Iteration-2 capture rates.} Bar height = \% of a subgroup's population routed to each leaf (mean $\pm$ std, five seeds). For \gls{cmnist}, each leaf shows the own-class matching-colour subgroup (left bar) and the four own-class mismatched-colour subgroups stacked (right bar).}
    \label{fig:node_dynamics}
\end{figure}

Unlike robust optimization methods that implicitly reweight samples, \gls{nct} generates an explicit partition of the data. We quantify this via the iteration-2 capture rate: the percentage of a ground-truth subgroup's population routed to each leaf, averaged across five seeds. Across all datasets, majority subgroups concentrate in easy leaves while minority subgroups concentrate in hard leaves (Figure~\ref{fig:node_dynamics}).

\textbf{Waterbirds.} The hard \emph{landbird} leaf captures $82.2\%$ of the minority \textit{landbird-on-water}, while the easy leaf retains $92.3\%$ of the majority \textit{landbird-on-land}. The waterbird branch mirrors this: hard captures $74.4\%$ of \textit{waterbird-on-land}, easy retains $89.5\%$ of \textit{waterbird-on-water}.

\textbf{CelebA.} The \textit{blond-male} concentrates in its hard leaf ($73.3\%$), and the not-blond hard leaf catches $28.7\%$ of \textit{not-blond-female}. Easy leaves retain $85.1\%$ of \textit{not-blond-male} and $74.2\%$ of \textit{blond-female}.

\textbf{ISIC.} The easy benign leaf captures $99.7\%$ of \textit{benign-with-patch}, exploiting the color-patch shortcut. The hard benign leaf isolates $47.3\%$ of the \textit{benign-no-patch} subgroup, forcing reliance on lesion-based features rather than the artifact.

\textbf{UMNIST.} The hard $5$--$9$ leaf captures $71.4\%$ of the undersampled \textit{digit 8}, separating it from the majority \textit{digit 5--9 (non-8)} population which the easy leaf retains at $86.4\%$.

\textbf{CMNIST.} The same pattern carries to the multi-class case. Within each class, samples whose color matches the class consolidate in its easy leaf ($98.7\%$), while samples whose color does not match concentrate in the hard leaf of the true class ($83.7\%$). The four off-class colors contribute roughly equally to each hard-leaf stack (Figure~\ref{fig:node_dynamics}), so the routing isolates the color-mismatched samples regardless of which off-class color they carry.

For datasets with only three inherent subgroups (\gls{isic}, \gls{umnist}), iteration 2 produces four leaves but one remains sparsely populated rather than artificially splitting a natural subgroup; the sparse leaf absorbs residual samples from the majority without disrupting the primary split.

We further analyze routing in Iteration~3 (eight leaves) in Appendix~\ref{app:node_analysis}. The hierarchical pattern persists at finer granularity: minority subgroups continue to concentrate in their class's hard branch (\gls{waterbirds} 75.1\% landbird-on-water, \gls{celeba} 60.3\% blond-male, \gls{umnist} 44.7\% digit-8), and on \gls{waterbirds} the sparse-node merging mechanism (Section~\ref{sec:depth_control}) activates to drop unused leaves.

\subsection{Qualitative Interpretability: Attribution Analysis}

\begin{figure}[t]
    \centering
    \includegraphics[width=1\textwidth]{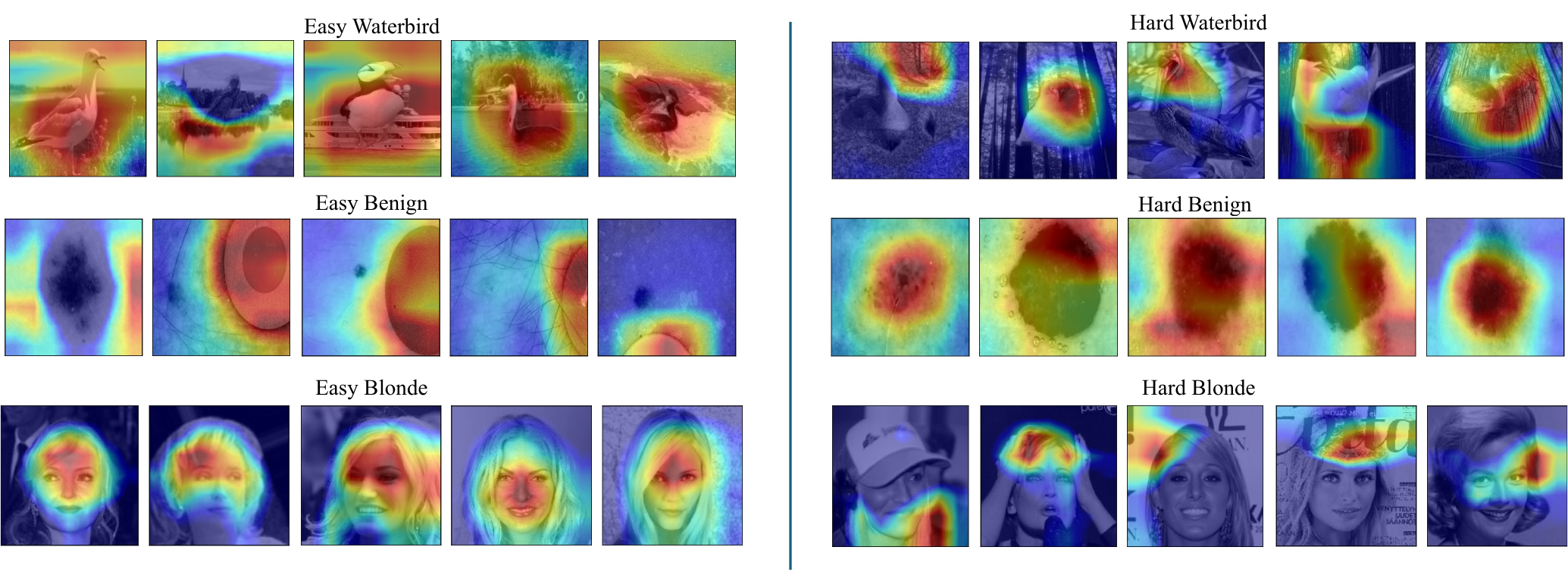}
    \caption{\textbf{Iteration-2 LayerGradCAM attributions.} Easy leaves (left) localize on the spurious cue---background on Waterbirds, the color-patch artifact on ISIC, the face on CelebA. Hard leaves (right) shift attention to the semantic class feature: the bird body, the lesion, and the hair respectively.}
    \label{fig:attribution}
\end{figure}

The capture-rate analysis shows that hard leaves isolate minority subgroups, but do they actually rely on different features? We use LayerGradCAM~\citep{selvaraju2017grad} to inspect what image regions each leaf attends to; if the structural separation is meaningful, easy leaves should rely on the spurious cue while hard leaves should attend to the semantic class feature. Figure~\ref{fig:attribution} shows five randomly selected samples per leaf on \gls{waterbirds}, \gls{isic}, and \gls{celeba}.

\textbf{Waterbirds.} The easy waterbird leaf attends to the water and surrounding scene---the lake surface and the wake behind the bird---rather than the bird itself. The hard leaf localizes tightly on the bird's body across both water and forest backgrounds.

\textbf{ISIC.} The easy benign leaf locks onto the colored skin-marker ring that frames many benign lesions in training, while the central lesion receives little attention. The hard benign leaf inverts this: it centers on the lesion---its boundary and pigmented body---and ignores the ring artifact.

\textbf{CelebA.} The easy blond leaf attends to the lower face---eyes, cheeks, mouth---using it as a gender proxy. The hard blond leaf shifts upward to the hair, the actual class signal; in the samples shown, this includes a cap occluding the hair and faces where the gender shortcut would mislead.

We omit \gls{umnist} and \gls{cmnist} from this figure: their spurious cues are global properties like digit identity/color rather than localizable regions, so attribution maps cannot meaningfully highlight them.

\section{Conclusion}
\label{sec:conclusion}

Existing methods for spurious correlations adjust parameters to improve worst-group accuracy but leave the classifier opaque about the latent groups it has learned. \Gls{nct} takes a different approach: training difficulty becomes a routing signal that partitions samples into easy and hard branches over successive iterations, with specialized heads at every node. The resulting tree serves as both classifier and partition---each leaf encodes a predicted class together with the difficulty path that produced it, and hard branches consistently isolate minority subgroups across binary and multi-class spurious correlations. Across five benchmarks, this framework delivers worst-group accuracy competitive with strong baselines---leading on \gls{isic} across all supervision tiers---while making the latent group structure visible at inference rather than hidden in the parameters.

The structural guarantees come with two conditions. First, routing relies on simplicity bias (Theorem~\ref{thm:main}): if minority features are themselves easy to learn, difficulty-based routing fails to isolate them. Second, the depth-selection rule depends on $\mathrm{pWGA}_2$ tracking true \gls{wga}, and may stop early when the proxy collapses faster than the true metric. Severe class imbalance can potentially amplify both, since a small misrouting rate can let an oversized easy class outnumber the genuine minority in a hard branch.

We identify two natural directions for future work. First, extending \gls{nct} to language settings---where spurious correlations arise from lexical artifacts or demographic markers---would test whether simplicity bias provides the same routing signal under different feature geometry. Second, the routing rule can admit richer signals beyond correctness, opening a natural research direction: gradient-based, attribution-based, or representation-disagreement criteria could partition data along axes that correctness alone may miss. We see \gls{nct} as an evidence that the latent group structure of a dataset can be recovered as the architecture itself, discovered without supervision and visible at inference.

\section*{Acknowledgement}
We acknowledge the support and funding of CIFAR, Natural Sciences and Engineering Research Council of Canada (NSERC) and the Canada Foundation for Innovation (CFI). This research was enabled through computational resources provided by the Research Computing Services group at the University of Calgary and the Digital Research Alliance of Canada.

\bibliographystyle{unsrtnat}
\bibliography{main}



\newpage
\section*{Appendix Contents}
\startcontents[appendices]
\printcontents[appendices]{}{1}{\setcounter{tocdepth}{2}}

\newpage

\appendix

\section{Extended Experimental Setup}
\label{app:setup}

\subsection{Dataset Details}
\label{app:datasets}

We evaluate on five benchmark datasets exhibiting spurious correlations. Table~\ref{tab:dataset_summary} summarizes the dataset characteristics and split sizes, and Table~\ref{tab:group_distributions} provides the training set group distributions.

\begin{table}[h]
\centering
\caption{\textbf{Dataset Summary.} Benchmark datasets with task descriptions, spurious attributes, and split sizes.}
\label{tab:dataset_summary}
\begin{tabular}{lllrrr}
\toprule
\textbf{Dataset} & \textbf{Task} & \textbf{Spurious Attribute} & \textbf{Train} & \textbf{Val} & \textbf{Test} \\
\midrule
\gls{waterbirds} & Bird type (binary)         & Background       & 4,795   & 1,199  & 5,794 \\
\gls{celeba}     & Hair color (binary)        & Gender           & 162,770 & 19,867 & 19,962 \\
\gls{isic}       & Lesion diagnosis (binary)  & Colored patches  & 19,124  & 2,390  & 2,392 \\
\gls{umnist}     & Digit range (binary)       & Digit identity   & 43,542  & 12,000 & 10,000 \\
\gls{cmnist}     & Digit pair (5-class)       & Color            & 54,000  & 6,000  & 10,000 \\
\bottomrule
\end{tabular}
\end{table}

\paragraph{Waterbirds.} Constructed by superimposing bird images from CUB-200-2011~\citep{wah2011caltech} onto backgrounds from Places~\citep{zhou2017places}. The task is binary classification of bird type (waterbird vs.\ landbird), where background (water vs.\ land) is spuriously correlated with the label. In training, 95\% of waterbirds appear on water and 95\% of landbirds on land. The validation and test sets are balanced across groups.

\paragraph{CelebA.} Binary hair color classification (blond vs.\ non-blond) on the \gls{celeba} dataset~\citep{liu2015faceattributes}, where gender serves as the spurious attribute. Blond males constitute only 1,387 samples (0.85\%) in training, making this the critical minority group.

\paragraph{ISIC.} Dermoscopic images from \gls{isic} 2019~\citep{tschandl2018ham10000,codella2018skin} for binary classification (benign vs.\ malignant). Following~\citet{sohoni2020no}, colored patches (acquisition artifacts) that appear predominantly on benign lesions serve as the spurious attribute.

\paragraph{UMNIST.} A modified MNIST~\citep{sohoni2020no} with binary classification: digits 0--4 vs.\ 5--9. Digit `8' is undersampled to 5\% of its original frequency in training, creating a spurious association between digit identity and class membership.

\paragraph{CMNIST.} A 5-class colored MNIST variant following~\citet{zhang2022correct}: digits are grouped into pairs $\{0,1\},\{2,3\},\{4,5\},\{6,7\},\{8,9\}$ and each class is dyed predominantly with a fixed color. The training set has 99.5\% color--class correlation; the validation and test sets are uncorrelated. The full $25 = 5 \times 5$ class-by-color grid serves as the ground-truth group structure, giving a multi-class spurious-correlation setting absent from the other four benchmarks. Aligned subgroups average $\approx 10{,}746$ samples each; conflicting subgroups average $\approx 13$ samples each.

\begin{table}[h]
\centering
\caption{\textbf{Group Distributions.} Training set composition by subgroup. Minority groups are italicized. For \gls{cmnist}, the 5 aligned and 20 conflicting subgroups are aggregated for compactness.}
\label{tab:group_distributions}
\begin{tabular}{llrr}
\toprule
\textbf{Dataset} & \textbf{Group} & \textbf{Count} & \textbf{\%} \\
\midrule
\multirow{4}{*}{\gls{waterbirds}}
    & Landbird on Land & 3,498 & 72.9 \\
    & \textit{Landbird on Water} & \textit{184} & \textit{3.8} \\
    & \textit{Waterbird on Land} & \textit{56} & \textit{1.2} \\
    & Waterbird on Water & 1,057 & 22.0 \\
\midrule
\multirow{4}{*}{\gls{celeba}}
    & Not Blond Female & 71,629 & 44.0 \\
    & Not Blond Male & 66,874 & 41.1 \\
    & Blond Female & 22,880 & 14.1 \\
    & \textit{Blond Male} & \textit{1,387} & \textit{0.9} \\
\midrule
\multirow{3}{*}{\gls{isic}}
    & \textit{Benign No Patch} & \textit{9,861} & \textit{51.6} \\
    & Benign With Patch & 7,420 & 38.8 \\
    & Malignant & 1,843 & 9.6 \\
\midrule
\multirow{3}{*}{\gls{umnist}}
    & Digits 0--4 & 24,449 & 56.2 \\
    & Digits 5--9 (non-8) & 18,859 & 43.3 \\
    & \textit{Digit 8} & \textit{234} & \textit{0.5} \\
\midrule
\multirow{2}{*}{\gls{cmnist}}
    & Class--color aligned (5 groups) & 53,730 & 99.5 \\
    & \textit{Class--color conflicting (20 groups)} & \textit{270} & \textit{0.5} \\
\bottomrule
\end{tabular}
\end{table}

\subsection{Preprocessing}
\label{app:preprocessing}
Per-dataset preprocessing is summarized in Table~\ref{tab:preprocessing}. We adopt the published preprocessing pipelines of the corresponding baselines verbatim and do not alter resolution, augmentation, or normalization. \gls{waterbirds} follows~\citet{sagawa2019distributionally}; \gls{celeba} follows the ImageNet-generic recipe used by~\citet{kirichenko2022last}; \gls{isic} and \gls{umnist} follow~\citet{sohoni2020no}; \gls{cmnist} follows~\citet{zhang2022correct}.

\begin{table}[h]
\centering
\caption{\textbf{Preprocessing pipelines.} All settings are taken from the cited baselines without modification. ``RRC'' = \texttt{RandomResizedCrop}; ``RC$(s, p)$'' = \texttt{RandomCrop} of size $s$ with padding $p$; ``CC'' = \texttt{CenterCrop}; ``HFlip''/``VFlip'' = horizontal/vertical flip with $p=0.5$.}
\label{tab:preprocessing}
\resizebox{\textwidth}{!}{%
\begin{tabular}{l c l l l}
\toprule
\textbf{Dataset} & \textbf{Res.} & \textbf{Train transform} & \textbf{Eval transform} & \textbf{Normalization} \\
\midrule
\gls{waterbirds} & $224$ & RRC$(224)$, scale $[0.7,1.0]$, ratio $[0.75,1.33]$; HFlip & Resize $256 \to$ CC $224$ & ImageNet \\
\gls{celeba}     & $224$ & RRC$(224)$, scale $[0.7,1.0]$, ratio $[0.75,1.33]$; HFlip & Resize $256 \to$ CC $224$ & ImageNet \\
\gls{isic}       & $224$ & RRC$(224)$, scale $[0.7,1.0]$, ratio $[0.75,1.33]$; HFlip, VFlip & Resize $256 \to$ CC $224$ & ISIC-specific \\
\gls{umnist}     & $32$  & RC$(28, 4) \to$ Resize $32$; HFlip & Resize $32$ & MNIST $(0.131, 0.308)$ \\
\gls{cmnist}     & $32$  & Resize $40 \to$ RC$(32, 0)$; no flip & Resize $40 \to$ CC $32$ & $(0.5, 0.5)$ \\
\bottomrule
\end{tabular}%
}
\end{table}

\section{Implementation Details}
\label{app:implementation}

\subsection{Architecture}
\label{app:architecture}

For \gls{waterbirds}, \gls{celeba}, and \gls{isic}, we use a ResNet-50~\citep{he2016deep} backbone initialized with ImageNet~\citep{deng2009imagenet} pretrained weights. For \gls{umnist} we use a LeNet-4~\citep{lecun1998gradient} backbone, and for \gls{cmnist} a LeNet-5 backbone, both trained from scratch. At iteration~1 each class head is an MLP block (\gls{waterbirds}, \gls{celeba}, \gls{isic}) or a Linear classifier (\gls{umnist}, \gls{cmnist}). At iterations~2--3 every dataset uses the asymmetric-head configuration of Section~\ref{sec:architecture}: easy children are linear and hard children use a Linear$\rightarrow$ReLU$\rightarrow$Dropout block. The hidden block is wrapped by an input batch-norm and a hidden batch-norm; iteration-1 heads omit these batch-norm layers (they are replaced by identity), so the additional normalization only takes effect once the hierarchy starts to grow.

\subsection{Training Protocol}
\label{app:training_protocol}

Each iteration consists of two phases (Section~\ref{sec:optimization}); the per-iteration Phase~1 fraction is dataset-specific (Table~\ref{tab:hyperparams}). Iteration~1 trains for only 1--3 epochs --- its purpose is hard-sample identification, not convergence, mirroring the JTT-style identification stage~\citep{liu2021just}. Iterations 2+ train for 50--100 epochs with early stopping. Checkpoint selection uses validation loss at iteration~1 (the routing partition does not yet exist) and $\mathrm{pWGA}_2$ thereafter.

\subsection{Hyperparameters}
\label{app:hyperparams}

Table~\ref{tab:hyperparams} lists the dataset-specific hyperparameters we tuned per benchmark. Choices held constant across all five datasets are: one-vs-all BCE loss, asymmetric heads from iteration~2 onward (linear easy child, linear--ReLU--dropout hard child), sparse-merge threshold $m_{\min}=20$, and depth-selection tolerance $z_\alpha = 1.96$. The optimizer is AdamW for every dataset except \gls{celeba}, where SGD with momentum~$0.9$ is used.

\begin{table}[h]
\centering
\caption{\textbf{Dataset-specific hyperparameters.} Constants held across all datasets (loss, asymmetric head structure, $m_{\min}$, $z_\alpha$) are listed in the surrounding text; backbone and head architecture choices are described in Appendix~\ref{app:architecture}.}
\label{tab:hyperparams}
\small
\setlength{\tabcolsep}{4pt}
\begin{tabular}{lccccc}
\toprule
\textbf{Parameter} & \textbf{\gls{waterbirds}} & \textbf{\gls{celeba}} & \textbf{\gls{isic}} & \textbf{\gls{umnist}} & \textbf{\gls{cmnist}} \\
\midrule
Head hidden dim & 32 & 128 & 96 & 8 & 64 \\
Head dropout & 0.3 & 0.3 & 0 & 0 & 0.2 \\
\midrule
Backbone LR & $2.7 \times 10^{-5}$ & $1 \times 10^{-5}$ & $8 \times 10^{-5}$ & $5 \times 10^{-3}$ & $2 \times 10^{-3}$ \\
Head LR & $1.3 \times 10^{-5}$ & $1 \times 10^{-4}$ & $1 \times 10^{-4}$ & $5 \times 10^{-4}$ & $1 \times 10^{-5}$ \\
LR decay factor & 2.5 & 1.5 & 2.5 & 1.5 & 1.0 \\
Weight decay & $2.4 \times 10^{-3}$ & $0.1$ & $0.1$ & $0.1$ & $5 \times 10^{-5}$ \\
\midrule
Batch size & 128 & 64 & 256 & 64 & 64 \\
Epochs (iter 1/2/3) & 1/100/100 & 1/50/50 & 1/50/50 & 3/50/50 & 2/50/50 \\
Phase 1 ratio (iter 1/2/3) & 0.0/0.3/0.3 & 1.0/0.3/0.3 & 1.0/0.5/0.5 & 0.0/0.5/0.5 & 0.0/0.2/0.7 \\
Scheduler & Plateau & Plateau & Step & Plateau & Plateau \\
Early stop patience & 15 & 10 & 15 & 15 & 5 \\
\midrule
Sampling strategy & Class weights & Downsample & Class weights & Geomean & Geomean \\
Aux loss weight ($\lambda_{\text{aux}}$) & 1.0 & 2.0 & 1.0 & 0.5 & 1.0 \\
Class weight cap & 40 & 40 & 40 & 40 & 40 \\
\bottomrule
\end{tabular}
\end{table}

\subsection{Hyperparameter Search}
\label{app:hp_search}

Hyperparameter values in Table~\ref{tab:hyperparams} were selected with Optuna~\citep{akiba2019optuna} using the Tree-structured Parzen Estimator (TPE) sampler over 50--100 trials per dataset, with 20 warm-up trials of random sampling before switching to TPE. The optimization objective was $\mathrm{pWGA}_2$ on the validation set (Section~\ref{sec:depth_control}), computed using routing-derived pseudo-labels rather than ground-truth subgroup annotations. The search ranged over backbone and head learning rates ($10^{-6}$ to $10^{-2}$), LR decay factor (1.0 to 3.0), weight decay ($10^{-5}$ to 2.0), head hidden dimension (16 to 128), head dropout (0.0 to 0.5), Phase~1 training ratios, auxiliary loss weight (0.0 to 2.0) and sampling strategy (\{class weights, downsample, geomean\}).

\section{Baseline Implementations}
\label{app:baselines}

\paragraph{Group DRO~\citep{sagawa2019distributionally}.} All Group DRO numbers in Table~\ref{tab:main_results} are taken directly from prior baseline papers: \gls{waterbirds} and \gls{celeba} from~\citet{sagawa2019distributionally}, \gls{isic} and \gls{umnist} from~\citet{sohoni2020no}, and \gls{cmnist} from~\citet{zhang2022correct}. We do not retrain Group DRO ourselves on any dataset.

\paragraph{GEORGE~\citep{sohoni2020no}.} GEORGE clusters the ERM feature space into pseudo-subgroups, trains Group DRO on those clusters, and selects the best checkpoint by worst-cluster validation accuracy---no ground-truth group labels are accessed at any stage. We do not retrain GEORGE ourselves: \gls{waterbirds}, \gls{celeba}, \gls{isic}, and \gls{umnist} numbers are taken from~\citet{sohoni2020no} and \gls{cmnist} numbers from~\citet{zhang2022correct}.

\paragraph{JTT~\citep{liu2021just}.} \gls{waterbirds} and \gls{celeba} numbers are taken from~\citet{liu2021just}; \gls{cmnist} numbers are taken from~\citet{zhang2022correct}. For \gls{isic} and \gls{umnist} we extend the official JTT codebase. The two JTT-specific knobs---identification epochs $T_\text{up}$ and upweight factor $\lambda_\text{up}$---are tuned via Optuna against validation \gls{wga}, yielding $(T_\text{up}, \lambda_\text{up}) = (1, 50)$ on \gls{isic} and $(1, 20)$ on \gls{umnist}. Model selection uses validation worst-group accuracy.

\paragraph{CnC~\citep{zhang2022correct}.} \gls{waterbirds}, \gls{celeba}, and \gls{cmnist} numbers are taken from~\citet{zhang2022correct}. For \gls{isic} and \gls{umnist} we use the official release and tune the stage-2 contrastive parameters---number of anchors / positives / negatives / easy-negatives, contrastive weight, temperature, and contrastive batch factor---via Optuna. The tuned configuration converges to $17$ samples per role, contrastive weight $0.75$, temperature $0.1$, and batch factor $32$.

\paragraph{DFR~\citep{kirichenko2022last}.} \gls{waterbirds} and \gls{celeba} numbers are taken from~\citet{kirichenko2022last}. For \gls{isic} and \gls{umnist} we use the paper's main variant ($\text{DFR}^{\text{Val}}_{\text{Tr}}$): the ERM-trained backbone is frozen and a logistic head is retrained on a group-balanced validation subset, with $20$ retrains at evaluation. The inverse regularisation strength $C$ is tuned via Optuna; class weights are not tuned, following the paper's main-variant recipe. We do not report \gls{cmnist}: the released variant is binary, and the multi-class adaptation requires non-trivial changes to the group-balanced retraining objective that we leave to future work.

\paragraph{EIIL~\citep{creager2021environment}.} \gls{waterbirds} and \gls{celeba} numbers are taken from~\citet{creager2021environment}; \gls{cmnist} numbers are taken from~\citet{zhang2022correct}. For \gls{isic} and \gls{umnist} we use the official two-stage pipeline. Stage-1 environment-inference uses the reference defaults ($2$ environments, environment lr $10^{-3}$, $10\,000$ inference steps). Stage-2 trains a Group DRO learner on the inferred environments ($\eta = 0.01$, patience $10$); the remaining stage-2 hyperparameters are tuned via Optuna against validation worst-group accuracy.

\paragraph{ExMap~\citep{chakraborty2024exmap}.} \gls{waterbirds} and \gls{celeba} numbers are taken from~\citet{chakraborty2024exmap}. For \gls{isic} and \gls{umnist} we use the global G-ExMap variant of the released pipeline: spectral clustering of \gls{lrp} attribution maps with the cluster count chosen automatically by the eigengap heuristic and \gls{lrp} $\gamma = 4$, followed by the authors' DFR-style last-layer retraining with $20$ retrains at evaluation. The LRP per-batch size is tuned via Optuna and converges to $8$ on \gls{isic} and $512$ on \gls{umnist}. We do not report \gls{cmnist}: ExMap inherits DFR's binary group-balancing for the retraining stage, and a faithful multi-class extension is non-trivial and left to future work.

\section{Cross-Head Calibration Diagnostics}
\label{app:calibration}

Because every \gls{nct} head is trained with independent \gls{bce}, the $\arg\max$ inference rule (\cref{eq:inference}) implicitly assumes that head logits are on comparable scales. Three implicit factors push them in that direction: the uniform binary BCE target, the shared parent-layer representation, and the per-iteration batch-norm modules introduced in Appendix~\ref{app:architecture}. We audit the resulting calibration with two diagnostics: the per-head logit mean/std across all test samples, and the decision margin (winning logit minus second-best logit) at each predicted leaf.

\paragraph{Per-head logit statistics.} Table~\ref{tab:logit_stats} reports the mean and standard deviation of each head's logit distribution on the test set, averaged across the five seeds. At iteration~1 the two class-head means are near-antisymmetric (a consequence of \gls{bce} with flipped binary targets). At iteration~2 the means diverge, but in every case the per-head standard deviations are comparable to or larger than the typical mean separation, so the logit distributions overlap and argmax can still route samples to the head that fits them best.

\begin{table}[h]
\centering
\caption{\textbf{Per-head logit mean $\pm$ standard deviation} on the test set, averaged across five seeds. Heads that were merged by the sparse-node rule and never selected are omitted. \gls{cmnist} is summarized in prose because its 5 iter-1 heads and 10 iter-2 heads do not fit the column layout.}
\label{tab:logit_stats}
\resizebox{\textwidth}{!}{%
\begin{tabular}{l cc cccc}
\toprule
& \multicolumn{2}{c}{\textbf{Iter 1}} & \multicolumn{4}{c}{\textbf{Iter 2}} \\
\cmidrule(lr){2-3} \cmidrule(lr){4-7}
Dataset & H0 & H1 & H0 & H1 & H2 & H3 \\
\midrule
\gls{waterbirds} & $+0.27$\,\tiny{$\pm$1.01} & $-0.16$\,\tiny{$\pm$0.90} & $-1.09$\,\tiny{$\pm$2.06} & $+0.46$\,\tiny{$\pm$1.37} & $-0.70$\,\tiny{$\pm$1.55} & $-0.39$\,\tiny{$\pm$1.13} \\
\gls{celeba}     & $+0.39$\,\tiny{$\pm$0.60} & $-0.39$\,\tiny{$\pm$0.62} & $+1.17$\,\tiny{$\pm$2.02} & $-0.39$\,\tiny{$\pm$0.94} & $-1.27$\,\tiny{$\pm$1.39} & $-0.65$\,\tiny{$\pm$0.81} \\
\gls{isic}       & $+0.80$\,\tiny{$\pm$1.44} & $-0.81$\,\tiny{$\pm$1.42} & $+0.47$\,\tiny{$\pm$5.82} & $-2.75$\,\tiny{$\pm$2.99} & $-7.26$\,\tiny{$\pm$4.64} & $-5.01$\,\tiny{$\pm$1.96} \\
\gls{umnist}     & $+0.89$\,\tiny{$\pm$6.21} & $-0.91$\,\tiny{$\pm$6.22} & $-2.50$\,\tiny{$\pm$6.24} & $-3.81$\,\tiny{$\pm$2.83} & $-4.57$\,\tiny{$\pm$6.29} & $-4.45$\,\tiny{$\pm$3.99} \\
\bottomrule
\end{tabular}%
}
\end{table}

For \gls{cmnist} (5-class, 5 heads at iter~1 and 10 at iter~2), the iter-1 head means span $[-5.49,\,-4.05]$ with an average within-head std of $5.59$, and the iter-2 head means span $[-1.21,\,+0.63]$ with an average within-head std of $4.00$. The per-head std is several times larger than the spread of means at every iteration, so the same overlap argument applies.

\paragraph{Decision margins.} Table~\ref{tab:margins} reports the decision margin (winning logit minus runner-up) per predicted leaf, averaged across five seeds. All margins are strictly positive, confirming that argmax is decisive even when per-head means differ. The smallest binary-task margin is $0.82$ (\gls{waterbirds} iter-2 H0); for \gls{cmnist} the minimum mean margin across the 5 iter-1 heads is $10.95$, and the minimum across the 10 iter-2 heads is $7.71$.

\begin{table}[h]
\centering
\caption{\textbf{Mean decision margin $\pm$ std} per predicted leaf (averaged across five seeds). \gls{cmnist} margins are summarized in prose for the same column-layout reason.}
\label{tab:margins}
\resizebox{\textwidth}{!}{%
\begin{tabular}{l cc cccc}
\toprule
& \multicolumn{2}{c}{\textbf{Iter 1}} & \multicolumn{4}{c}{\textbf{Iter 2}} \\
\cmidrule(lr){2-3} \cmidrule(lr){4-7}
Dataset & H0 & H1 & H0 & H1 & H2 & H3 \\
\midrule
\gls{waterbirds} & $1.94$\,\tiny{$\pm$0.84} & $1.46$\,\tiny{$\pm$0.85} & $0.82$\,\tiny{$\pm$0.26} & $3.26$\,\tiny{$\pm$1.71} & $3.04$\,\tiny{$\pm$1.37} & $2.39$\,\tiny{$\pm$1.79} \\
\gls{celeba}     & $1.33$\,\tiny{$\pm$0.71} & $0.99$\,\tiny{$\pm$0.72} & $2.81$\,\tiny{$\pm$1.79} & $1.34$\,\tiny{$\pm$0.97} & $2.52$\,\tiny{$\pm$1.81} & $0.90$\,\tiny{$\pm$0.66} \\
\gls{isic}       & $3.27$\,\tiny{$\pm$1.83} & $1.70$\,\tiny{$\pm$1.11} & $8.93$\,\tiny{$\pm$3.27} & $3.55$\,\tiny{$\pm$2.20} & $4.89$\,\tiny{$\pm$3.22} & $2.26$\,\tiny{$\pm$1.85} \\
\gls{umnist}     & $11.61$\,\tiny{$\pm$6.03} & $10.34$\,\tiny{$\pm$5.80} & $7.26$\,\tiny{$\pm$4.85} & $1.62$\,\tiny{$\pm$1.25} & $5.85$\,\tiny{$\pm$3.59} & $2.45$\,\tiny{$\pm$1.74} \\
\bottomrule
\end{tabular}%
}
\end{table}

These two diagnostics together indicate that, despite the absence of explicit calibration, the implicit alignment is sufficient for argmax routing.

\section{Training-Time Comparison}
\label{app:training_time}

Table~\ref{tab:training_time} reports wall-clock training time per method across the five benchmarks, measured on a single H100 GPU under identical hardware.

The two-stage methods that retrain a full robust model on top of ERM sit at the heavy end. GEORGE roughly doubles ERM on every dataset because its GDRO stage runs the same epoch budget. \gls{jtt} on \gls{waterbirds} takes $567$~minutes because its Stage~2 upweights the error set by $\lambda_\text{up}{=}50$, inflating the effective training set. \gls{exmap} and \gls{eiil} on \gls{isic} are dominated by their ResNet-50 ERM stage rather than by the LRP or environment-inference modules. \gls{dfr}'s last-layer retrain is effectively free, so its reported total is dominated by the ERM stage.

\gls{nct}'s training cost stays low for three reasons. Iteration~1 runs for only 1--3 epochs on every dataset except \gls{celeba}, since its purpose is hard-sample identification rather than convergence~\citep{liu2021just}. Phase~1 of iterations~2--3 trains only the new head layers ($\approx 1\%$ of parameters) and skips the backbone backward pass. The node sampler further shrinks each iteration-2/3 epoch on \gls{celeba} (downsampling to the smallest node size) and on \gls{umnist}/\gls{cmnist} (geomean sampling); \gls{waterbirds} and \gls{isic} use class-weighted BCE on full data, so they benefit only from the first two factors. The combined effect is visible in the table: on \gls{celeba}, \gls{nct} costs $41.6$~minutes against \gls{dfr}'s $148$, and on \gls{isic} it costs $106$~minutes against \gls{exmap}'s $177.7$ and \gls{eiil}'s $236$.

\begin{table}[h]
\centering
\caption{\textbf{Wall-clock training time} (minutes, single H100 GPU). Numbers are total end-to-end wall-clock time on the published per-dataset configuration. \gls{erm} and Group DRO are paper-sourced and not retrained in our pipeline. ``--'' indicates baselines not adapted to that benchmark.}
\label{tab:training_time}
\small
\setlength{\tabcolsep}{6pt}
\begin{tabular}{l ccccc}
\toprule
\textbf{Method} & \textbf{\gls{waterbirds}} & \textbf{\gls{celeba}} & \textbf{\gls{isic}} & \textbf{\gls{umnist}} & \textbf{\gls{cmnist}} \\
\midrule
GEORGE         & $112.9$ & $337.4$ & $130.3$ & $16.1$ & $19.1$ \\
JTT            & $567.4$ & --      & --      & $23.5$ & $5.4$  \\
CnC            & $29.2$  & --      & $251.8$      & $13.5$ & $61.7$ \\
DFR            & $19.7$  & $148.2$ & $64.6$  & $3.1$  & -- \\
EIIL           & $14.3$  & $47.7$  & $236.2$ & $6.5$  & $5.4$  \\
ExMap          & $20.5$  & $70.4$  & $177.7$ & $3.5$  & --  \\
\midrule
\textbf{\gls{nct} (Ours)} & $28.1$ & $41.6$ & $105.8$ & $2.6$ & $3.2$ \\
\bottomrule
\end{tabular}
\end{table}

\section{Depth-Selection Diagnostics}
\label{app:depth_diagnostics}

\subsection{Per-Iteration Test Performance}
\label{app:depth_per_iter}

Table~\ref{tab:depth_ablation} gives the per-iteration test \gls{wga} and average accuracy across the five benchmarks.

\begin{table}[h]
\centering
\caption{\textbf{Impact of hierarchy depth.} Worst-group accuracy (WGA) and average accuracy at each iteration of the hierarchy. Best \gls{wga} and average accuracy are highlighted in bold.}
\label{tab:depth_ablation}
\setlength{\tabcolsep}{4pt}
\resizebox{\textwidth}{!}{%
\begin{tabular}{l cc cc cc}
\toprule
\multirow{2}{*}{\textbf{Dataset}} & \multicolumn{2}{c}{\textbf{Iteration 1}} & \multicolumn{2}{c}{\textbf{Iteration 2}} & \multicolumn{2}{c}{\textbf{Iteration 3}} \\
\cmidrule(lr){2-3} \cmidrule(lr){4-5} \cmidrule(lr){6-7}
 & \textbf{WGA} & \textbf{Avg} & \textbf{WGA} & \textbf{Avg} & \textbf{WGA} & \textbf{Avg} \\
\midrule
\gls{waterbirds}       & $18.6$\,\tiny{$\pm$4.3} & $59.9$\,\tiny{$\pm$1.2} & $\mathbf{87.8}$\,\tiny{$\pm$1.0} & $\mathbf{92.7}$\,\tiny{$\pm$0.6} & $86.8$\,\tiny{$\pm$1.9} & $\mathbf{92.7}$\,\tiny{$\pm$0.8} \\
\gls{celeba}           & $23.7$\,\tiny{$\pm$2.2} & $85.3$\,\tiny{$\pm$1.2} & $\mathbf{86.1}$\,\tiny{$\pm$0.9} & $88.2$\,\tiny{$\pm$0.9} & $82.0$\,\tiny{$\pm$5.4} & $\mathbf{90.0}$\,\tiny{$\pm$1.2} \\
\midrule
\gls{isic} (Non-Patch) & $0.812$\,\tiny{$\pm$0.001} & $0.899$\,\tiny{$\pm$0.001} & $0.916$\,\tiny{$\pm$0.004} & $0.953$\,\tiny{$\pm$0.002} & $\mathbf{0.925}$\,\tiny{$\pm$0.003} & $\mathbf{0.960}$\,\tiny{$\pm$0.002} \\
\gls{isic} (Histopath) & $0.702$\,\tiny{$\pm$0.001} & $0.899$\,\tiny{$\pm$0.001} & $0.871$\,\tiny{$\pm$0.007} & $0.953$\,\tiny{$\pm$0.002} & $\mathbf{0.881}$\,\tiny{$\pm$0.004} & $\mathbf{0.960}$\,\tiny{$\pm$0.002} \\
\midrule
\gls{umnist}           & $80.8$\,\tiny{$\pm$8.5} & $96.6$\,\tiny{$\pm$1.0} & $\mathbf{94.4}$\,\tiny{$\pm$1.1} & $97.2$\,\tiny{$\pm$0.6} & $93.0$\,\tiny{$\pm$2.6} & $\mathbf{97.5}$\,\tiny{$\pm$0.4} \\
\gls{cmnist}           & $0.0$\,\tiny{$\pm$0.0}  & $20.7$\,\tiny{$\pm$1.0} & $\mathbf{72.3}$\,\tiny{$\pm$3.0} & $87.6$\,\tiny{$\pm$1.5} & $71.9$\,\tiny{$\pm$2.8} & $\mathbf{87.7}$\,\tiny{$\pm$2.1} \\
\bottomrule
\end{tabular}%
}
\end{table}

The optimal hierarchy depth is dataset-dependent and, on \gls{waterbirds} and \gls{cmnist}, even seed-dependent. The depth-selection rule of Section~\ref{sec:depth_control} adapts \emph{per seed}: it splits 4-1 in favor of iteration~2 on \gls{celeba}, picks iteration~3 for four of five seeds on \gls{isic}, and splits 3-2 in favor of iteration~2 on \gls{waterbirds}, 3-2 in favor of iteration~3 on \gls{umnist}, and 4-1 in favor of iteration~2 on \gls{cmnist}, all without consulting ground-truth groups.

\subsection{Validation Proxy Values}
\label{app:depth_proxy_values}

Table~\ref{tab:depth_diagnostics} reports the validation proxy values (pseudo-WGA, or pseudo-AUROC for \gls{isic}) consumed by the depth-selection rule (\cref{eq:depth_rule}). The iter-2-to-iter-3 drop in the validation proxy is largest on \gls{celeba}, small on \gls{waterbirds} and \gls{cmnist}, and reversed on \gls{isic}, mirroring the per-seed selection counts above.

\begin{table}[h]
\centering
\caption{\textbf{Validation pseudo-WGA per iteration} (mean over five seeds). \Gls{isic} reports pseudo-AUROC instead of pseudo-WGA, matching its main-text metric. The depth-selection rule stops at the first iteration whose proxy value falls more than $z\sqrt{p^{\star}(1-p^{\star})/n_{\text{worst}}}$ below the running best ($z=1.96$).}
\label{tab:depth_diagnostics}
\begin{tabular}{l cc}
\toprule
\textbf{Dataset} & \textbf{Iter 2} & \textbf{Iter 3} \\
\midrule
\gls{waterbirds} & $0.821$\,\tiny{$\pm$0.015} & $0.793$\,\tiny{$\pm$0.043} \\
\gls{celeba}     & $0.702$\,\tiny{$\pm$0.047} & $0.657$\,\tiny{$\pm$0.038} \\
\gls{isic} (pseudo-AUROC) & $0.937$\,\tiny{$\pm$0.011} & $0.945$\,\tiny{$\pm$0.005} \\
\gls{umnist}     & $0.781$\,\tiny{$\pm$0.028} & $0.778$\,\tiny{$\pm$0.024} \\
\gls{cmnist}     & $0.792$\,\tiny{$\pm$0.029} & $0.779$\,\tiny{$\pm$0.047} \\
\bottomrule
\end{tabular}
\end{table}

\subsection{Proxy Quality: Pseudo-WGA vs.\ True-WGA}
\label{app:pwga_correlation}

The depth-selection rule and the within-iteration early-stopping criterion both rely on $\mathrm{pWGA}_2$ (Section~\ref{sec:depth_control}), computed from the model's own routing labels rather than ground-truth groups. Its usefulness therefore rests on whether it tracks true \gls{wga}. Table~\ref{tab:pwga_corr} reports the Spearman rank correlation $\rho$ between the validation $\mathrm{pWGA}_2$ trajectory and the held-out true \gls{wga} across training epochs (iterations~2 and~3 combined). The correlation is high on \gls{waterbirds}, \gls{celeba}, and \gls{cmnist} ($\rho = 0.87$, $0.92$, and $0.90$), moderate on \gls{isic} ($\rho = 0.71$), and weakest on \gls{umnist} ($\rho = 0.43$).

\paragraph{On \gls{umnist}'s lower proxy correlation.} The weaker Spearman on \gls{umnist} ($\rho = 0.43$) reflects mild simplicity bias on this benchmark: \gls{erm} alone reaches $93.9\%$ \gls{wga} (Table~\ref{tab:main_results}), above \gls{jtt} and \gls{cnc}, so iteration~1 already classifies many minority digit-$8$ samples correctly. The hard $5$--$9$ leaf still captures most digit-$8$ samples at test time ($71.4\%$, Section~\ref{sec:structural_interpretability}), but a fraction is routed to the easy branch instead. This mixing means the hard leaf no longer cleanly corresponds to the minority subgroup, lowering $\mathrm{pWGA}_2$'s rank correlation with oracle \gls{wga}. The method itself remains effective: \gls{umnist} achieves $93.7\%$ test \gls{wga}, suggesting $\mathrm{pWGA}_2$ is robust to moderate misrouting---it still selects a depth that delivers strong worst-group performance even when its rank fidelity drops.

\begin{table}[h]
\centering
\caption{\textbf{$\mathrm{pWGA}_2$ proxy quality.} Spearman correlation between the validation $\mathrm{pWGA}_2$ trajectory and held-out true \gls{wga} across training (mean $\pm$ std over five seeds). Values close to $1$ indicate the proxy ranks checkpoints the same way the oracle would.}
\label{tab:pwga_corr}
\small
\setlength{\tabcolsep}{6pt}
\begin{tabular}{l c}
\toprule
\textbf{Dataset} & \textbf{Spearman $\rho$} \\
\midrule
\gls{waterbirds} & $0.87$\,\tiny{$\pm$0.07} \\
\gls{celeba}     & $0.92$\,\tiny{$\pm$0.04} \\
\gls{isic}       & $0.71$\,\tiny{$\pm$0.04} \\
\gls{umnist}     & $0.43$\,\tiny{$\pm$0.21} \\
\gls{cmnist}     & $0.90$\,\tiny{$\pm$0.08} \\
\bottomrule
\end{tabular}
\end{table}

\section{Node Sampling Strategies}
\label{app:sampling}

From iteration~2 onwards the leaves are imbalanced by construction (hard children always smaller than their easy siblings), and the natural class distribution of \gls{waterbirds}, \gls{celeba}, and \gls{cmnist} is also imbalanced at iteration~1. We support three rebalancing strategies; this appendix gives the precise definition of each, the per-dataset assignment, and a sensitivity comparison.

\subsection{Definitions.} 

Let $K^{(t)}$ be the number of leaves at iteration $t$ and $n_j$ the count of training samples assigned to node $j$.

\begin{itemize}
    \item \textbf{Class weights:} Sample-level loss is multiplied by $w_j = N / (K^{(t)} \cdot n_j)$, capped at a per-dataset maximum. Iteration~1 uses uniform weights; from iteration~2 onwards $w_j$ is recomputed per epoch from the current pseudo-label assignment.
    \item \textbf{Downsample:} At every epoch we draw a fresh subsample of size $\min_j n_j$ from each node. The epoch length is $K^{(t)} \cdot \min_j n_j$. Class weighting is disabled because every leaf already contributes equally.
    \item \textbf{Geomean:} At every epoch we resample each node to size $\bar n = (\prod_j n_j)^{1/K^{(t)}}$. Majority nodes are downsampled; minority nodes are oversampled with replacement. The geometric mean minimizes the maximum stretch factor across nodes and bounds the variance introduced by oversampling.
\end{itemize}

\subsection{Per-Dataset Assignment.}

\paragraph{Waterbirds and ISIC.} Training sets are small (4{,}795 and 19{,}124 samples), and after iteration~2 the rarest hard child of \gls{waterbirds} contains on the order of $10^{2}$ samples. Downsample would collapse the epoch to that count per leaf and starve every node, and geomean still trims the overall sample count. Class-weighted \gls{bce} keeps every sample available and produces the most stable \gls{wga} on both.

\paragraph{CelebA.} With 162{,}770 training samples, even the rarest hard child after iteration~2 retains enough examples for stable optimisation under downsample. Class weighting destabilises training because the imbalance ratio is very large (the largest weight cap is hit on every minority leaf), and geomean introduces variance through the large oversampling factor needed on hard children. Downsample balances batches without inflating any single sample.

\paragraph{UMNIST and CMNIST.} Totals are large (43{,}542 and 60{,}000 samples) but hard children are extremely sparse---digit~`8' has 234 training samples on \gls{umnist}, and \gls{cmnist}'s bias-conflicting color groups are even smaller. Class weighting produces extreme per-sample weights, while downsample starves all nodes; geomean balances the two, moving every node toward $\bar n$ with bounded oversampling on the small leaves.

\section{Component Ablations}
\label{app:component_ablations}

We isolate four components: trainable scope, head architecture, sparse-node merging and the auxiliary-loss weight. For every cell we pick the better of iteration~2 and~3 per seed (test \gls{wga}, or overall \gls{auroc} for \gls{isic}) and average across seeds. The default-configuration column reuses the paper's main-table seeds; non-default columns are run on a separate three-seed sweep.

\subsection{Trainable Scope}
\label{app:abl_trainable}

Each iteration is normally split into two phases: Phase~1 trains the new heads only with the backbone and parent layers frozen, then Phase~2 fine-tunes the entire network. Table~\ref{tab:ablation_trainable} compares this standard schedule against two extremes: \emph{head-only}, which keeps the backbone and parent layers frozen throughout, and \emph{full fine-tuning}, which skips Phase~1 entirely.

Head-only training collapses on every dataset---the pretrained backbone alone cannot resolve the hard children. Full fine-tuning is competitive with the standard schedule on \gls{waterbirds}, \gls{celeba}, \gls{umnist}, and \gls{isic}, where the dataset's iter-1 Phase~1 ratio is already near zero, but loses $10$~pp on \gls{cmnist}.

\begin{table}[h]
\caption{\textbf{Trainable-scope ablation.} Standard two-phase training vs.\ head-only and full fine-tuning.}
\label{tab:ablation_trainable}
\centering
\small
\setlength{\tabcolsep}{6pt}
\begin{tabular}{lccc}
\toprule
\textbf{Dataset} & \textbf{Standard} & \textbf{Head-Only} & \textbf{Full FT} \\
\midrule
\gls{waterbirds} (WGA)     & \textbf{88.0\,\tiny{$\pm$0.9} } & 42.3\,\tiny{$\pm$4.8}  & 87.2\,\tiny{$\pm$2.9} \\
\gls{celeba} (WGA)         & \textbf{86.1\,\tiny{$\pm$0.9}}  & 45.2\,\tiny{$\pm$3.6}  & 85.2\,\tiny{$\pm$0.9} \\
\gls{cmnist} (WGA)         & \textbf{72.9\,\tiny{$\pm$2.8}}  & 0.8\,\tiny{$\pm$1.1}   & 62.8\,\tiny{$\pm$18.1} \\
\gls{umnist} (WGA)         & \textbf{93.7\,\tiny{$\pm$2.5}}  & 57.6\,\tiny{$\pm$12.3}  & 90.1\,\tiny{$\pm$8.0} \\
\gls{isic} (Overall AUROC) & \textbf{.959\,\tiny{$\pm$.001}} & .900\,\tiny{$\pm$.004} & .949\,\tiny{$\pm$.012} \\
\bottomrule
\end{tabular}
\end{table}

\subsection{Head Architecture}
\label{app:abl_heads}

At iteration~2 and beyond, easy and hard children of a split can use different head architectures. Table~\ref{tab:abl_split} compares the asymmetric default (linear easy, MLP hard) with three alternatives: both children linear, both children MLP, and the reversed asymmetry (MLP easy, linear hard).

The asymmetric configuration is the strongest choice on \gls{celeba} and \gls{cmnist}, and within seed noise of the best alternative on \gls{waterbirds}, \gls{umnist}, and \gls{isic}. The reversed asymmetry consistently underperforms or matches the standard order, confirming that the additional capacity belongs on the hard branch where the conflicting features live, not on the easy branch where the parent has already done most of the work. We retain the default for its principled motivation: easy children classify samples the parent already gets right and need little capacity, while hard children carry the conflicting features that motivate a richer block.

\begin{table}[h]
\centering
\caption{\textbf{Head-architecture ablation.} \emph{Linear} = both children linear; \emph{MLP} = both children MLP; \emph{Asymmetric} = linear easy / MLP hard (default); \emph{Reversed} = MLP easy / linear hard.}
\label{tab:abl_split}
\small
\setlength{\tabcolsep}{6pt}
\begin{tabular}{l cccc}
\toprule
\textbf{Dataset} & \textbf{Linear} & \textbf{MLP} & \textbf{Asymmetric} & \textbf{Reversed} \\
\midrule
\gls{waterbirds} (WGA)     & 86.0\,\tiny{$\pm$0.7}  & \textbf{88.6\,\tiny{$\pm$0.4}}  & 88.0\,\tiny{$\pm$0.9}  & 87.8\,\tiny{$\pm$1.4} \\
\gls{celeba} (WGA)         & 83.8\,\tiny{$\pm$2.3}  & 85.3\,\tiny{$\pm$1.4}  & \textbf{86.1\,\tiny{$\pm$0.9}}  & 84.2\,\tiny{$\pm$2.7} \\
\gls{cmnist} (WGA)         & 59.0\,\tiny{$\pm$20.4} & 68.2\,\tiny{$\pm$3.8}  & \textbf{72.9\,\tiny{$\pm$2.8}}  & 66.2\,\tiny{$\pm$8.5} \\
\gls{umnist} (WGA)         & 92.0\,\tiny{$\pm$4.8}  & 92.9\,\tiny{$\pm$3.6}  & \textbf{93.7\,\tiny{$\pm$2.5}}  & 83.4\,\tiny{$\pm$4.4} \\
\gls{isic} (Overall AUROC) & .955\,\tiny{$\pm$.003} & .959\,\tiny{$\pm$.002} & \textbf{.959\,\tiny{$\pm$.001}} & .957\,\tiny{$\pm$.005} \\
\bottomrule
\end{tabular}
\end{table}

\subsection{Sparse-Node Merging}
\label{app:abl_merging}

Hard children of an iteration-2 split can inherit very few samples. The sparse-node merging rule folds any hard child with fewer than $m_{\min}$ training samples into its easy sibling. Table~\ref{tab:abl_minsamples} compares disabling the rule ($m_{\min}=0$) against the default $m_{\min}$, separately at iteration~2 and iteration~3.

The threshold only ever fires on \gls{waterbirds} at iteration~3, where it converts a $0.7$~pp regression into a small recovery and reduces seed variance. On every other dataset the iteration-2 and iteration-3 hard children stay above the cutoff (the per-dataset $m_{\min}=20$ is set conservatively so the rule fires only when a hard branch would otherwise be optimised on a handful of samples), so the two columns are identical by construction. We retain the rule as a safety net for deeper hierarchies.

\begin{table}[h]
\centering
\caption{\textbf{Sparse-node merging ablation.} ``Disabled'' = $m_{\min}=0$. ``Default'' uses the per-dataset $m_{\min}$ from the main results. Values are reported separately at iteration~2 and iteration~3 (3-seed mean $\pm$ std).}
\label{tab:abl_minsamples}
\small
\setlength{\tabcolsep}{5pt}
\begin{tabular}{l cc cc}
\toprule
 & \multicolumn{2}{c}{\textbf{Iteration 2}} & \multicolumn{2}{c}{\textbf{Iteration 3}} \\
\cmidrule(lr){2-3} \cmidrule(lr){4-5}
\textbf{Dataset} & \textbf{Disabled} & \textbf{Default} & \textbf{Disabled} & \textbf{Default} \\
\midrule
\gls{waterbirds} (WGA)     & 88.4\,\tiny{$\pm$0.4}  & 88.4\,\tiny{$\pm$0.4}  & 85.5\,\tiny{$\pm$2.7}  & \textbf{86.2\,\tiny{$\pm$2.1}} \\
\gls{celeba} (WGA)         & 86.1\,\tiny{$\pm$0.9}  & 86.1\,\tiny{$\pm$0.9}  & 82.0\,\tiny{$\pm$5.4}  & 82.0\,\tiny{$\pm$5.4} \\
\gls{cmnist} (WGA)         & 54.5\,\tiny{$\pm$30.9} & 54.5\,\tiny{$\pm$30.9} & 58.3\,\tiny{$\pm$19.0} & 58.3\,\tiny{$\pm$19.0} \\
\gls{umnist} (WGA)         & 94.3\,\tiny{$\pm$1.5}  & 94.3\,\tiny{$\pm$1.5}  & 93.5\,\tiny{$\pm$3.4}  & 93.5\,\tiny{$\pm$3.4} \\
\gls{isic} (Overall AUROC) & .950\,\tiny{$\pm$.005} & .950\,\tiny{$\pm$.005} & .958\,\tiny{$\pm$.004} & .958\,\tiny{$\pm$.004} \\
\bottomrule
\end{tabular}
\end{table}

\subsection{Auxiliary-Loss Weight}
\label{app:abl_aux}

\begin{table}[h]
\centering
\caption{\textbf{Auxiliary-loss coefficient sweep.} \gls{wga} (or Overall \gls{auroc} for \gls{isic}) as $\lambda_{\text{aux}}$ varies.}
\label{tab:aux_sweep}
\small
\setlength{\tabcolsep}{4pt}
\resizebox{\textwidth}{!}{%
\begin{tabular}{l cccccc}
\toprule
$\lambda_{\text{aux}}$ & 0.0 & 0.3 & 0.5 & 0.7 & 1.0 & 2.0 \\
\midrule
\gls{waterbirds} (WGA)     & 86.4\,\tiny{$\pm$0.5}  & 88.6\,\tiny{$\pm$1.5}  & 87.7\,\tiny{$\pm$2.1}  & \textbf{88.7\,\tiny{$\pm$1.1}}  & 88.0\,\tiny{$\pm$0.9}  & 88.4\,\tiny{$\pm$1.6} \\
\gls{celeba} (WGA)         & 68.5\,\tiny{$\pm$3.4}  & 74.3\,\tiny{$\pm$4.0}  & 76.8\,\tiny{$\pm$3.9}  & 78.7\,\tiny{$\pm$4.2}  & 81.5\,\tiny{$\pm$2.8}  & \textbf{86.1\,\tiny{$\pm$0.9}} \\
\gls{cmnist} (WGA)         & 50.6\,\tiny{$\pm$27.9} & 53.3\,\tiny{$\pm$27.2} & 61.2\,\tiny{$\pm$16.0} & 53.0\,\tiny{$\pm$28.5} & \textbf{72.9\,\tiny{$\pm$2.8}}  & 65.7\,\tiny{$\pm$10.5} \\
\gls{umnist} (WGA)         & 89.1\,\tiny{$\pm$7.2}  & 93.5\,\tiny{$\pm$2.3}  & 93.7\,\tiny{$\pm$2.5}  & 92.9\,\tiny{$\pm$4.0}  & \textbf{95.4\,\tiny{$\pm$1.8}}  & 92.5\,\tiny{$\pm$5.0} \\
\gls{isic} (Overall AUROC) & .955\,\tiny{$\pm$.004} & .955\,\tiny{$\pm$.002} & .959\,\tiny{$\pm$.003} & .958\,\tiny{$\pm$.001} & \textbf{.959\,\tiny{$\pm$.001}} & .957\,\tiny{$\pm$.001} \\
\bottomrule
\end{tabular}%
}
\end{table}

The auxiliary term $\lambda_{\text{aux}}\mathcal{L}^{(t-1)}$ balances child specialization against parent stability. Table~\ref{tab:aux_sweep} sweeps $\lambda_{\text{aux}} \in \{0, 0.3, 0.5, 0.7, 1.0, 2.0\}$ on all five datasets.

Removing the auxiliary loss costs $1.6$~pp \gls{wga} on \gls{waterbirds}, $17.6$~pp on \gls{celeba}, and $22$~pp on \gls{cmnist}; \gls{umnist} and \gls{isic} are within seed noise. The empirical optimum is dataset-dependent and does not always match the chosen default, but every dataset benefits from at least one nonzero coefficient.

\section{Theoretical Proofs}
\label{app:proofs}

\subsection{Proof of Theorem~\ref{thm:main} (Minority Enrichment)}
\label{app:thm_main_proof}

We give the full derivation of Theorem~\ref{thm:main} deferred from Section~\ref{sec:theory}.

\begin{proof}
Since $\ell_i^{(1)} = y_i$, we write $y$ for the true label throughout. Under simplicity bias (Assumption~\ref{assump:simplicity}), the model predicts $\hat{\ell}^{(1)} \approx a$. We derive the probability of error for each group:
\begin{itemize}
    \item Majority groups ($a=y$): The model predicts $\hat{\ell}^{(1)} \approx a = y$, which is correct. An error occurs only with probability $\epsilon$:
    \begin{equation*}
    P(\hat{\ell}^{(1)} \neq y \mid a = y) = P(\hat{\ell}^{(1)} \neq a) = \epsilon
    \end{equation*}
    \item Minority groups ($a \neq y$): The model predicts $\hat{\ell}^{(1)} \approx a \neq y$, which is incorrect. An error occurs with high probability $1-\epsilon$:
    \begin{equation*}
    P(\hat{\ell}^{(1)} \neq y \mid a \neq y) = P(\hat{\ell}^{(1)} = a) = 1 - \epsilon
    \end{equation*}
\end{itemize}
We apply Bayes' theorem to find the posterior probability that a misclassified sample belongs to a minority group:
\begin{equation*}
\begin{split}
P(a \neq y \mid \hat{\ell}^{(1)} \neq y) &= \frac{P(\hat{\ell}^{(1)} \neq y \mid a \neq y) P(a \neq y)}{P(\hat{\ell}^{(1)} \neq y)} \\
&= \frac{(1-\epsilon)(1-\rho)}{\epsilon\rho + (1-\epsilon)(1-\rho)}
\end{split}
\end{equation*}
\end{proof}

\subsection{Constructive Lower Bound for \texorpdfstring{$\Delta$}{Δ}}
\label{app:delta_bound}

We instantiate the approximation gap of Theorem~\ref{thm:approx} on a tractable
Gaussian feature model and derive a closed-form lower bound that grows with the
spurious correlation strength.

\subsubsection{Setup}

Let $y, a \in \{0,1\}$ denote the class label and spurious attribute,
respectively, and let $\Phi$ denote the standard normal CDF. Each input has
features $\mathbf{z} = (z_s, z_c)$ drawn according to
\begin{align}
  z_s &\sim \mathcal{N}\!\bigl((2a-1)\mu_s,\; 1\bigr), \label{eq:zs}\\
  z_c &\sim \mathcal{N}\!\bigl((2y-1)\mu_c,\; 1\bigr), \label{eq:zc}
\end{align}
where $\mu_s, \mu_c > 0$ control the magnitudes of the spurious and core
features, and the map $(2a-1)$ sends $\{0,1\} \to \{-1,+1\}$. We assume equal
class priors and that $z_s, z_c$ are conditionally independent given $(y,a)$,
so the class-conditional covariance is the identity. A linear classifier
$w = (w_1, w_2)$ predicts the label via $\operatorname{sign}(w_1 z_s + w_2 z_c)$.

\subsubsection{Specialist Classifiers}

\paragraph{Easy subgroup ($a = y$).}
Both features align with the label. For two Gaussian classes with equal priors
and shared identity covariance, the Bayes-optimal linear direction is the
difference of class means (the standard LDA result, with the shared covariance
eliminating the quadratic term). The two class-conditional means on the easy
subgroup are $(\mu_s, \mu_c)$ for $y=1$ and $(-\mu_s, -\mu_c)$ for $y=0$, so
\[
  (\mu_s, \mu_c) - (-\mu_s, -\mu_c) \;=\; 2(\mu_s, \mu_c)
  \;\propto\; (\mu_s, \mu_c),
\]
giving the optimal weight vector $w_E \propto (\mu_s, \mu_c)$.

\paragraph{Hard subgroup ($a \neq y$).}
The spurious feature flips sign relative to the label: when $y=1$ we have
$a=0$, so $z_s \sim \mathcal{N}(-\mu_s, 1)$ instead of $\mathcal{N}(\mu_s, 1)$.
The class-conditional means become $(-\mu_s, \mu_c)$ for $y=1$ and
$(\mu_s, -\mu_c)$ for $y=0$, so by the same LDA argument the optimal weight
vector is $w_H \propto (-\mu_s, \mu_c)$.

\paragraph{Specialist risk.}
Take $w_E = (\mu_s, \mu_c)$. On the easy subgroup with $y=1$, the score
$w_E \cdot \mathbf{z} = \mu_s z_s + \mu_c z_c$ is a linear combination of
independent Gaussians, hence Gaussian itself with
\[
  \mathbb{E}[w_E \cdot \mathbf{z}] = \mu_s^2 + \mu_c^2,
  \qquad
  \operatorname{Var}(w_E \cdot \mathbf{z}) = \mu_s^2 + \mu_c^2.
\]
A misclassification occurs when this score is negative, so
\begin{equation}
  R_E^{\star}
  \;=\; P\!\left(w_E \cdot \mathbf{z} < 0 \,\big|\, y=1\right)
  \;=\; \Phi\!\left(\frac{0 - (\mu_s^2 + \mu_c^2)}{\sqrt{\mu_s^2 + \mu_c^2}}\right)
  \;=\; \Phi\!\left(-\sqrt{\mu_s^2 + \mu_c^2}\right).
  \label{eq:specialist-risk}
\end{equation}
The case $y=0$ is symmetric and yields the same value. For the hard subgroup,
$w_H = (-\mu_s, \mu_c)$ pairs with $z_s$ whose mean is negated relative to the
easy case, so $w_H \cdot \mathbf{z}$ has the same Gaussian distribution as
$w_E \cdot \mathbf{z}$ on the easy subgroup, and $R_H^{\star} = R_E^{\star}$.

\subsubsection{Single Classifier}

A single linear classifier shares $(w_1, w_2)$ across both subgroups, giving
\begin{align}
  R_E(w) &\;=\; \Phi\!\left(-\frac{w_1 \mu_s + w_2 \mu_c}{\|w\|}\right),
  \label{eq:RE}\\
  R_H(w) &\;=\; \Phi\!\left(-\frac{-w_1 \mu_s + w_2 \mu_c}{\|w\|}\right).
  \label{eq:RH}
\end{align}
The worst-group risk is $\max(R_E(w), R_H(w))$, and we seek the $w$ that
minimises it.

\paragraph{The minimax is attained at $w_1 = 0$.}
The risks \eqref{eq:RE} and \eqref{eq:RH} are related by the symmetry
$w_1 \mapsto -w_1$, which swaps $R_E$ and $R_H$, so the worst-group risk is
symmetric in $w_1$. This motivates checking $w_1 = 0$, but symmetry alone does
not force the optimum onto the axis (a symmetric function may have off-axis
minima); we therefore verify the bound directly. Since $\Phi$ is monotone
increasing,
\[
  \max(R_E(w), R_H(w)) \;=\; \Phi\!\left(-\min(a_E, a_H)\right),
  \quad
  a_E \;=\; \frac{w_1\mu_s + w_2\mu_c}{\|w\|},\;\;
  a_H \;=\; \frac{-w_1\mu_s + w_2\mu_c}{\|w\|}.
\]
Using the identity $\min(b + c,\, -b + c) = c - |b|$ with $b = w_1\mu_s$ and
$c = w_2\mu_c$, and bounding in three steps:
\begin{equation}
  \min(a_E, a_H)
  \;=\; \frac{w_2 \mu_c - |w_1|\mu_s}{\sqrt{w_1^2 + w_2^2}}
  \;\overset{(i)}{\leq}\; \frac{w_2 \mu_c}{\sqrt{w_1^2 + w_2^2}}
  \;\overset{(ii)}{\leq}\; \frac{|w_2| \mu_c}{\sqrt{w_1^2 + w_2^2}}
  \;\overset{(iii)}{\leq}\; \mu_c,
  \label{eq:min-bound}
\end{equation}
where (i) uses $|w_1|\mu_s \geq 0$, (ii) uses $w_2 \leq |w_2|$, and (iii) uses
$|w_2| \leq \|w\|$. Equality holds in (i) iff $w_1 = 0$, in (ii) iff
$w_2 \geq 0$, and in (iii) iff $w_1 = 0$; jointly, equality throughout requires
$w_1 = 0$ and $w_2 > 0$ (the strict sign on $w_2$ is needed for $\|w\| > 0$).

\paragraph{Resulting risk.}
From \eqref{eq:min-bound} we obtain $\max(R_E(w), R_H(w)) \geq \Phi(-\mu_c)$
for every $w \neq 0$, with equality iff $w_1 = 0$. The single classifier
therefore attains worst-group risk $\Phi(-\mu_c)$ and does so only by
discarding the spurious feature entirely.

\subsubsection{Closed-Form Gap}

Combining the two cases yields
\begin{equation}
  \boxed{\;
  \Delta \;=\; \Phi(-\mu_c) - \Phi\!\left(-\sqrt{\mu_s^2 + \mu_c^2}\right)
  \;>\; 0 \quad \text{whenever } \mu_s > 0,
  \;}
  \label{eq:delta-closed-form}
\end{equation}
since $\sqrt{\mu_s^2 + \mu_c^2} > \mu_c$ and $\Phi$ is monotone. The gap grows
with the spurious magnitude $\mu_s$: stronger spurious correlations make
structural separation more beneficial.

\paragraph{Numerical example.}
For a \gls{waterbirds}-like regime with $\mu_s = 2$ and $\mu_c = 1$,
\[
  \Phi(-1) \approx 0.159,
  \qquad
  \Phi(-\sqrt{5}) \approx 0.013,
\]
so $\Delta \approx 0.146$, i.e.\ a $14.6$ percentage-point gap.

\subsubsection{Interpretation}

JTT and similar reweighting methods upweight hard samples within a single
classifier; that classifier still faces the gap $\Delta$ in
\eqref{eq:delta-closed-form}. \Gls{nct}'s separate heads remove this gap.
Routing errors do not compound at inference because all leaf heads run in
parallel and the prediction is taken via $\arg\max$.

Hierarchy is particularly helpful because child heads receive parent
representations (Section~\ref{sec:architecture}), enabling progressive
specialisation that is more sample-efficient than a flat mixture discovering
both classes and subgroups simultaneously.

\section{Iteration-3 Routing Analysis}
\label{app:node_analysis}

\begin{figure*}[h]
    \centering
    \includegraphics[width=\linewidth]{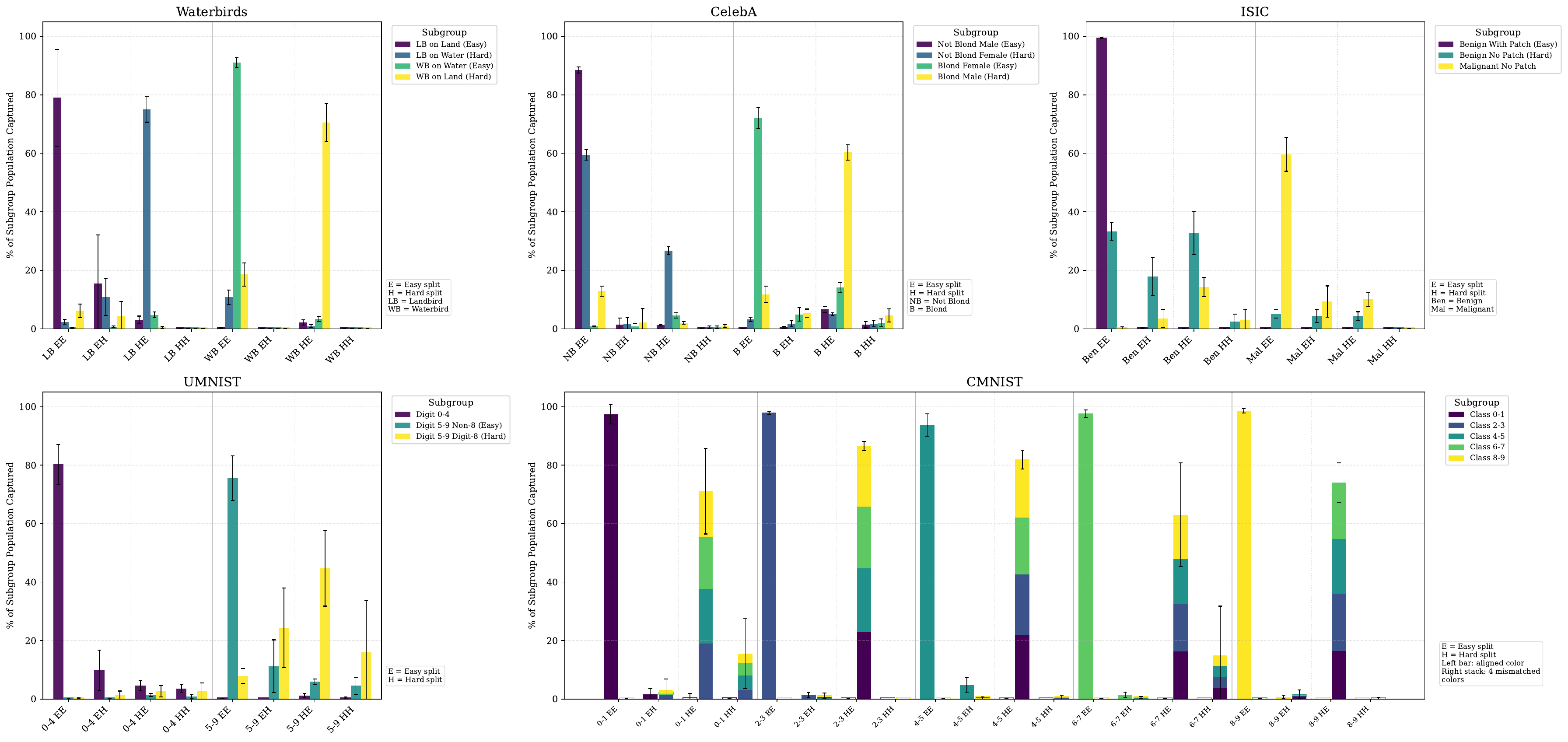}
    \caption{\textbf{Iteration-3 capture rates.} Bar height = \% of a subgroup's population routed to each leaf (mean $\pm$ std, five seeds). Iter-3 leaf labels: \textbf{EE} = easy~$\rightarrow$~easy, \textbf{EH} = easy~$\rightarrow$~hard, \textbf{HE} = hard~$\rightarrow$~easy, \textbf{HH} = hard~$\rightarrow$~hard. For \gls{cmnist}, each leaf shows the own-class matching-color subgroup (left bar) and the four own-class mismatched-color subgroups stacked (right bar).}
    \label{fig:node_dynamics_iter3}
\end{figure*}

Iteration~3 splits each iter-2 leaf into easy and hard children, yielding eight leaves per binary task and twenty leaves on \gls{cmnist}. Figure~\ref{fig:node_dynamics_iter3} reports the per-subgroup capture rate at each iter-3 leaf. The pattern is consistent across datasets: bias-aligned subgroups consolidate in the easy-easy (EE) leaf of their own class, and minority subgroups concentrate in their class's hard-easy (HE) leaf---the iter-2 hard child further refined by the additional split.

\textbf{Waterbirds.} The minority \textit{landbird-on-water} subgroup lands $75.1 \pm 4.5\%$ in LB:HE; the symmetric minority \textit{waterbird-on-land} lands $70.5 \pm 6.5\%$ in WB:HE. Majority subgroups stay in their EE leaves (LB-Land $79.0 \pm 16.5\%$, WB-Water $91.0 \pm 1.7\%$). Two leaves (LB:HH, WB:EH) are unused on every seed because the sparse-node merging rule (Section~\ref{sec:depth_control}) folded their iter-2 children into the easy siblings.

\textbf{CelebA.} The minority \textit{blond-male} subgroup concentrates in B:HE ($60.3 \pm 2.6\%$), with another $12.9\%$ in NB:EE. Majority \textit{not-blond-male} retains $88.5\%$ in NB:EE. \textit{Blond-female} consolidates in B:EE ($72.1\%$) with $14.1\%$ in B:HE.

\textbf{ISIC.} The patch shortcut is locked in: \textit{benign-with-patch} captures $99.5\%$ of its population in Ben:EE. The conflicting \textit{benign-no-patch} subgroup splits roughly evenly between Ben:EE ($33.3\%$) and Ben:HE ($32.7\%$). Malignant samples reach a malignant-class leaf at $\approx 79\%$ (Mal:EE $59.7\%$, Mal:EH $9.3\%$, Mal:HE $10.1\%$).

\textbf{UMNIST.} The undersampled \textit{digit-8} minority concentrates in 5-9:HE ($44.7 \pm 13.0\%$); combined with the other 5-9 hard descendants, the 5-9 hard branch holds around 85\% of digit-8 samples. Majority subgroups stay in their EE leaves (Digit 0-4 at $80.3\%$, Digit 5-9 Non-8 at $75.6\%$).

\textbf{CMNIST.} Across all five classes, bias-aligned subgroups consolidate in their own EE leaf at $97.1\%$ on average and stay within their true-class branch at $99.2\%$. Bias-conflicting subgroups (color $\neq$ digit class) reach a hard leaf of their true class at $83.0\%$, averaged across the twenty conflicting subgroups. The iter-2 to iter-3 transition therefore refines the within-class routing without leaking conflicting samples to the wrong digit branch.


\end{document}